\definecolor{hcitecolor}{RGB}{40,40,100}
\tikzset{cnode/.style={circle,draw=magenta!75!red,fill=white,thick,minimum size=10pt,circular drop shadow}}
\tikzset{cnodesmall/.style={circle,draw,thick,minimum size=.25em}}
\tikzset{middlearrow/.style={
		decoration={markings,
			mark= at position 0.3 with {\arrow{#1}} ,
		},
		postaction={decorate}
	}
}
\tikzset{
	annotated cuboid/.pic={
		\tikzset{%
			every edge quotes/.append style={midway, auto},
			/cuboid/.cd,
			#1
		}
		\draw [every edge/.append style={pic actions, densely dashed, opacity=0}, pic actions]
		(0,0,0) coordinate (o) -- ++(-\cubescale*\cubex,0,0) coordinate (a) -- ++(0,-\cubescale*\cubey,0) coordinate (b) edge coordinate [pos=1] (g) ++(0,0,-\cubescale*\cubez)  -- ++(\cubescale*\cubex,0,0) coordinate (c) -- cycle
		(o) -- ++(0,0,-\cubescale*\cubez) coordinate (d) -- ++(0,-\cubescale*\cubey,0) coordinate (e) edge (g) -- (c) -- cycle
		(o) -- (a) -- ++(0,0,-\cubescale*\cubez) coordinate (f) edge (g) -- (d) -- cycle;
		\path [every edge/.append style={pic actions, |-|}]
		;
	},
	/cuboid/.search also={/tikz},
	/cuboid/.cd,
	width/.store in=\cubex,
	height/.store in=\cubey,
	depth/.store in=\cubez,
	units/.store in=\cubeunits,
	scale/.store in=\cubescale,
	width=10,
	height=10,
	depth=10,
	units=cm,
	scale=.1,
}
\colorlet{mewnol}{red!75!magenta}%
\colorlet{allanol}{blue!50!black}%
\algnewcommand{\varalg}{\textit}
\title{Covariant Compositional Networks for Learning Graphs}
\author{Risi Kondor, Hy Truong Son, Horace Pan \& Brandon Anderson  \\
Department of Computer Science\\
The University of Chicago\\
Chicago, IL - 60637 \\
\texttt{\{risi,hytruongson,hopan,brandona\}@cs.uchicago.edu} \\
\And
Shubhendu Trivedi \\
Toyota Technological Institute \\
Chicago, IL - 60637\\
\texttt{shubhendu@ttic.edu}
}
\newtheorem{theorem}{Theorem}
\newtheorem{proposition}[theorem]{Proposition}
\newtheorem{definition}{Definition}
\newenvironment{pfprop}[1]{\textbf{Proof of Proposition \ref{#1}.}}{\mbox{}\hfill\m{\blacksquare}\\ \mbox{}\noindent}
\def\thm@space@setup{%
  \thm@preskip=6pt plus 0pt minus 0pt
  \thm@postskip=0pt plus 0pt minus 0pt 
}
\newcommand{\includepic}[2]{\includegraphics[width=#1\textwidth]{#2}}
\newcommand{\neuron}{\mathfrak{n}}
\newcommand{\labl}{l}
\newcommand{\prt}{\Pcal}
\begin{document}

\maketitle

\begin{abstract}
Most existing neural networks for learning graphs address permutation invariance by conceiving 
of the network as a message passing scheme, where each node sums the feature vectors coming from its neighbors. 
We argue that this imposes a limitation on their representation power, and instead propose a new 
general architecture for representing objects consisting of a hierarchy of parts, which we 
call \emph{covariant compositional networks} (CCNs). Here, covariance means that the activation of each neuron must 
transform in a specific way under permutations, similarly to steerability in CNNs. 
We achieve covariance by making each activation transform according to a tensor representation of the 
permutation group, and derive the corresponding tensor aggregation rules that each neuron must implement. 
Experiments show that CCNs can outperform competing methods on standard graph learning benchmarks.

\end{abstract}

\section{Introduction}

Learning on graphs has a long history in the kernels literature, including approaches based on 
random walks \citep{Gaertner02, BorKri05, FerEtal13}, counting subgraphs \citep{ShervEtal09}, spectral 
ideas  \citep{VishyRisi2010}, label propagation schemes with hashing \citep{Shervashidze2011,NeumannEtal},  
and even algebraic ideas \citep{KondorBorgwardt2008}. 
Many of these papers address moderate size problems in chemo- and bioinformatics, 
and the way they represent graphs is essentially fixed. 

Recently, with the advent of deep learning and much larger datasets, 
a sequence of neural network based approaches have appeared to address the same problem, 
starting with \citep{Scarselli}.  
In contrast to the kernels framework, neural networks 
effectively integrate the classification or regression problem at 
hand with learning the graph representation itself, in a single, end-to-end system. 
In the last few years, there has been a veritable  explosion in research activity in this area. Some
of the proposed graph learning architectures 
\citep{Duvenaud2015, Kearns2016, Niepert2016} directly seek inspiration from the type of classical  
CNNs that are used for image recognition \citep{LeCun1998, Hinton2012}. 
These methods involve first fixing a vertex
ordering, then moving a filter across vertices while doing some computation as a function of the
local neighborhood to generate a representation. This process is then repeated multiple times like in
classical CNNs to build a deep graph representation. Other notable works on graph neural
networks include \citep{LiZemel2015, Schutt2017, Battaglia2016, KipfWelling2017}. 
Very recently, \citep{Gilmer2017} showed that many of these approaches can be seen to be specific
instances of a general message passing formalism, and coined the term 
\emph{message passing neural networks} (MPNNs) to refer to them collectively. 
\ignore{
Viewed through the lens of this common framework, these approaches can be seen to
differ in specification of the message passing, update or readout phases. It is also to be noted
that some authors also presents MPNNs as generalizations of convolutional networks due to their
similarity with image based CNNs, which operate on grid graphs. Some features that seem to indicate
this similarity are: a) multiple feature maps generated by some local computation and b) the
effective area of influence of each neuron increases as we go deeper in the network.
}

While MPNNs have been very successful in applications 
and are an active field of research, they differ from classical CNNs in a fundamental way:
the internal feature representations in CNNs are \emph{equivariant} to such transformations of 
the inputs as translation and rotations \citep{CohenWelling2016, CohenWelling2017}, 
the internal representations in MPNNs are fully invariant. 
This is a direct result of the fact that MPNNs deal with the permutation invariance 
issue in graphs simply by summing the messages coming from each neighbor. 
In this paper we argue that this is a serious limitation that restricts the representation power of MPNNs. 

\ignore{
To intuitively understand the nature of this limitation, it is useful to consider
the activation $f_{\ell}(x)$ for neuron $x$ in a higher layer of a CNN. The activation of the neuron
transforms in a controlled manner when there is (for example) a translation of the input image. In
the case of graphs, it is essential that the activations transform in a controlled linear manner
when the labels of the graph are permuted. In MPNNs, however, on permuting the labels, there is no
transformation of the activation at all. To put it a bit differently, if classical CNNs followed a
recipe similar to MPNNs for transformation that matter for improving sample complexity in the image
recognition domain, then in higher layers the filter $\chi_{\ell}$ would be symmetric and invariant
to (say) 90 degree rotations, which is an absurd constraint.
}

MPNNs are ultimately compositional (part-based) models, that build up the representation of the graph from the 
representations of a hierarchy of subgraphs. To address the covariance issue, we study 
the covariance behavior of such networks in general, introducing a new general class of neural network 
architectures, which we call \emph{compositional networks} (comp-nets). One advantage of this generalization is that 
instead of focusing attention on the mechanics of how information propagates from node to node, 
it emphasizes the connection to convolutional networks, in particular, it shows that 
what is missing from MPNNs is essentially the analog of \emph{steerability}. 

Steerability implies that the activations (feature vectors) at a given neuron must transform according to 
a specific representation (in the algebraic sense) of the symmetry group of its receptive field, in 
our case, the group of permutations, \m{\Sbb_m}. In this paper we only consider the defining representation 
and its tensor products, leading to first, second, third etc.\:order \emph{tensor activations}. 
We derive the general form of covariant tensor propagation in comp-nets, and find that 
each ``channel'' in the network corresponds to a specific way of contracting a higher order tensor to a lower 
order one. Note that here by \emph{tensor activations} we mean not just that each activation is expressed as 
a multidimensional array of numbers (as the word is usually used in the neural networks literature), but also 
that it transforms in a specific way under permutations, which is a more stringent criterion. 
The parameters of our covariant comp-nets are the entries of the mixing matrix that prescribe how these channels 
communicate with each other at each node. 
Our experiments show that this new architecture can beat scalar message passing neural networks on 
several standard datasets.

\section{Learning graphs}

Graph learning encompasses a broad range of problems where the inputs are graphs and the outputs 
are class labels (classification), real valued quantities (regression) or more general, possibly 
combinatorial, objects. 
In the standard supervised learning setting this means that the training set consists of 
\m{m} input/output pairs \m{\cbrN{(G_1,y_1),(G_2,y_2),\ldots,(G_m,y_m)}},  
where each \m{G_i} is a graph and \m{y_i} is the corresponding label, 
and the goal is to learn a function \m{h\colon G\to y} that will successfully predict 
the labels of further graphs that were not in the training set.

\ignore{
The most classical setting for this type of problem is chemoinformatics, where 
the objective is to predict certain properties of molecules (toxic/non-toxic, binding/non-binding 
or physical properties such as boiling point, specific heat, etc) from their 
chemical structure. However, recently, a host of other innovative applications have 
appeared in the literature from .... .
At a more general level, one can argue that graph learning has always been a paradigmatic problem 
on the combinatorial side of machine learning, because graphs are such a general type of combinatorial 
structure, yet they are so easy to conceptualize. Many other problems have natural reductions to  
graph learning, for example... 
}

By way of fixing our notation, in the following we assume the each graph \m{G} is a pair 
\m{(V,E)}, where \m{V} is the vertex set of \m{G} and \m{E\subseteq V\times V} is its edge set. 
For simplicity, we assume that \m{V=\cbrN{\oneton{n}}}. We also assume that \m{G} has no 
self-loops (\m{(i,i)\not\in E} for any \m{i\tin V}) and that \m{G} is symmetric, 
i.e., \m{(i,j)\tin E \Rightarrow (j,i)\tin E}\footnote{Our framework has natural generalizations 
to non-symmetric graphs and graphs with self-loops, but in the interest of keeping our discussion 
as simple as possible, we will not discuss these cases in the present paper.}. 
We will, however, allow each edge \m{(i,j)} to have a corresponding weight 
\m{w_{i,j}}, and each vertex \m{i} to have a corresponding feature vector (vertex label) \m{\labl_i\tin \RR^d}. 
The latter, in particular, is important in many scientific applications, where \m{\labl_i} might encode, 
for example, what type of atom occupies a particular site in a molecule, or the identity 
of a protein in a biochemical interaction network. 
All the topological information about \m{G} can be summarized in an adjacency 
matrix \m{A\tin\RR^{n\times n}}, where \m{A_{i,j}=w_{i,j}} if \m{i} and \m{j} are connected by 
an edge, and otherwise \m{A_{i,j}\<=0}. When dealing with labeled graphs, 
we also have to provide \m{(\sseq{\labl}{n})} to fully specify \m{G}. 

One of the most fascinating aspects of graphs, but also what makes graph learning 
challenging, is that they involve structure at multiple different scales. 
In the case when \m{G} is the graph of a protein, for example, an ideal graph learning algorithm 
would represent \m{G} in a manner that simultaneously captures structure at the level of individual atoms, 
functional groups, interactions between functional groups, subunits of the protein, and the 
protein's overall shape. 

The other major requirement for graph learning algorithms relates to the fact that the usual ways to store 
and present graphs to learning algorithms have a critical spurious symmetry: 
If we were to permute the vertices of \m{G} by any permutation \m{\sigma\colon\cbrN{\oneton{n}}\to\cbrN{\oneton{n}}} 
(in other words, rename vertex \m{1} as \m{\sigma(1)}, vertex \m{2} as \m{\sigma(2)}, etc.), then 
the adjacency matrix would change to 
\[A'_{i,j}=A_{\sigma^{-1}\nts(i),\sigma^{-1}\nts(j)},\]
and simultaneously the vertex labels would change to \m{(\sseq{\labl'}{n})}, 
where \m{{\labl'}_i\<={\labl}_{\sigma^{-1}\nts(i)}}. 
However, \m{G'\<=(A',\sseq{\labl'}{n})} would still represent exactly the same graph as 
\m{G\<=(A,\sseq{\labl}{n})}.  
In particular, (a) in training, whether \m{G} or \m{G'} is presented to the algorithm 
must not make a difference to the final hypothesis \m{h} that it returns,  
(b) \m{h} itself must satisfy \m{h(G)=h(G')} for any labeled graph and its permuted variant.  

Most learning algorithms for combinatorial objects hinge on some sort of fixed or learned internal 
representation of data, called the \emph{feature map}, which, in our case we denote \m{\phi(G)}. 
The set of all \m{n!} possible permutations of \m{\cbrN{\oneton{n}}} forms a group called the 
symmetric group of order \m{n}, denoted \m{\Sn}.   
The permutation invariance criterion can then be formulated as follows  (Figure \ref{fig: invariance}). 
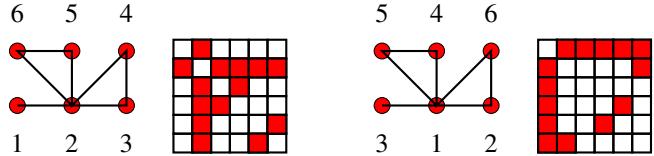
\begin{figure}[t]
\begin{center}

\begin{tikzpicture}
[
box/.style={rectangle,draw=black,thick, minimum size=0.25cm},
]
\draw[fill=red] (0.8,0) circle (3pt);
\draw[fill=red] (1.525,0) circle (3pt);
\draw[fill=red] (2.25,0) circle (3pt);
\draw[fill=red] (0.8,0.725) circle (3pt);
\draw[fill=red] (1.525,0.725) circle (3pt);
\draw[fill=red] (2.25,0.725) circle (3pt);
\node at (0.8,-0.5) {1};
\node at (1.525,-0.5) {2};
\node at (2.25,-0.5) {3};
\node at (2.25,1.225) {4};
\node at (1.525,1.225) {5};
\node at (0.8,1.225) {6};
\draw[thick] (0.8,0) -- (1.525,0) -- (2.25,0) -- (2.25,0.725) -- (1.525,0) -- (1.525,0.725) -- (0.8,0.725) -- (1.525,0);

\foreach \x in {3.0,3.25,3.5,3.75,4.0,4.25}{
	\foreach \y in {-0.5,-0.25,0,0.25,0.5,0.75}
	\node[box] at (\x,\y){};
}

\node[box,fill=red] at (3.25,0.75){};
\node[box,fill=red] at (3.0,0.5){}; \node[box,fill=red] at (3.5,0.5){}; \node[box,fill=red] at (3.75,0.5){}; \node[box,fill=red] at (4.0,0.5){}; \node[box,fill=red] at (4.25,0.5){};
\node[box,fill=red] at (3.25,0.25){}; \node[box,fill=red] at (3.75,0.25){};
\node[box,fill=red] at (3.25,0){}; \node[box,fill=red] at (3.5,0){};
\node[box,fill=red] at (4.25,-0.25){};\node[box,fill=red] at (3.25,-0.25){};
\node[box,fill=red] at (3.25,-0.5){};\node[box,fill=red] at (4.0,-0.5){};

\end{tikzpicture}
\hspace{25pt}
\begin{tikzpicture}
[
box/.style={rectangle,draw=black,thick, minimum size=0.25cm},
]
\draw[fill=red] (0.8,0) circle (3pt);
\draw[fill=red] (1.525,0) circle (3pt);
\draw[fill=red] (2.25,0) circle (3pt);
\draw[fill=red] (0.8,0.725) circle (3pt);
\draw[fill=red] (1.525,0.725) circle (3pt);
\draw[fill=red] (2.25,0.725) circle (3pt);
\node at (0.8,-0.5) {3};
\node at (1.525,-0.5) {1};
\node at (2.25,-0.5) {2};
\node at (2.25,1.225) {6};
\node at (1.525,1.225) {4};
\node at (0.8,1.225) {5};
\draw[thick] (0.8,0) -- (1.525,0) -- (2.25,0) -- (2.25,0.725) -- (1.525,0) -- (1.525,0.725) -- (0.8,0.725) -- (1.525,0);

\foreach \x in {3.0,3.25,3.5,3.75,4.0,4.25}{
	\foreach \y in {-0.5,-0.25,0.0,0.25,0.5,0.75}
	\node[box] at (\x,\y){};
}

\node[box,fill=red] at (3.25,0.75){}; \node[box,fill=red] at (3.5,0.75){}; \node[box,fill=red] at (3.75,0.75){}; \node[box,fill=red] at (4.0,0.75){}; \node[box,fill=red] at (4.25,0.75){};
\node[box,fill=red] at (3.0,0.5){}; \node[box,fill=red] at (4.25,0.5){};
\node[box,fill=red] at (3.0,0.25){};
\node[box,fill=red] at (3.0,0){}; \node[box,fill=red] at (4.0,0){};
\node[box,fill=red] at (3.75,-0.25){}; \node[box,fill=red] at (3.0,-0.25){};
\node[box,fill=red] at (3.25,-0.5){}; \node[box,fill=red] at (3.0,-0.5){};

\end{tikzpicture}
\end{center}
\caption{\label{fig: invariance}
(a) A small graph \m{G} with 6 vertices and its adjacency matrix. 
(b) An alternative form \m{G'} of the same graph, derived from \m{G} by renumbering the 
vertices by a permutation \m{\sigma\colon\cbrN{1,2,\ldots,6}\mapsto \cbrN{1,2,\ldots,6}}.  
The adjacency matrices of \m{G} and \m{G'} are different, but topologically they represent the 
same graph. Therefore, we expect the feature map \m{\phi} 
to satisfy \m{\phi(G)=\phi(G')}.
}
\end{figure}

\begin{definition}\label{def: invariant}
Let \m{\Acal} be a graph learning algorithm that uses a feature map \m{G\mapsto \phi(G)}. 
We say that the feature map \m{\phi} (and consequently the algorithm \m{\Acal}) is \df{permutation invariant} 
if, given any \m{n\tin\NN}, any \m{n} vertex labeled graph \m{G=(A,\sseq{\labl}{n})}, 
and any permutation \m{\sigma\tin\Sn}, 
letting \m{G'\<=(A',\sseq{\labl'}{n})}, where \m{A'_{i,j}\<=A_{\sigma^{-1}\nts(i),\sigma^{-1}\nts(j)}} 
\,and\: \m{\labl'_i\<=\labl_{\sigma^{-1}\nts(i)}},\, we have that\, \m{\phi(G)\<=\phi(G')}. 
\end{definition}

Capturing multiscale structure and respecting permutation invariance are the two the key constraints 
around which most of the graph learning literature revolves. 
In kernel based learning, for example, invariant kernels have been constructed by counting random walks 
\citep{Gaertner02}, matching eigenvalues of the graph Laplacian \citep{VishyRisi2010} and using 
algebraic ideas \citep{KondorBorgwardt2008}.

\section{Compositional networks}

Many recent graph learning papers,  
whether or not they make this explicit, employ a \emph{compositional} approach to modeling graphs, 
building up the representation of \m{G} from representations of subgraphs.  
At a conceptual level, this is similar 
to part-based modeling, which has a long history in machine learning 
\citep{Fischler1973,Ohta1978,Tu2005,Felzenszwalb2005,Zhu2006,Felzenszwalb2010}. 
In this section we introduce a general, abstract architecture called \df{compositional networks (comp-nets)}
for representing complex objects as a combination of their parts, and show that several exisiting 
graph neural networks can be seen as special cases of this framework. 

\begin{definition}
Let \m{\Gcal} be a compound object with \m{n} elementary parts (atoms) \m{\Ecal=\cbrN{\sseq{e}{n}}}. 
A \df{composition scheme} for \m{\Gcal} is a directed acyclic graph (DAG) \m{\Mcal} 
in which each node \m{\neuron_i} is associated with some subset \m{\prt_i} of \m{\Ecal}  
(these subsets are called the \df{parts} of \m{\Gcal}) in such a way that 
\begin{compactenum}[~1.]
\item If \m{\neuron_i} is a leaf node, then \m{\prt_i} contains a single atom \m{e_{\xi(i)}}\footnote{
Here \m{\xi} is just a function that establishes the mapping between each leaf node and the corresponding atom.}. 
\item \m{\Mcal} has a unique root node \m{\neuron_r}, which corresponds to the entire set \m{\cbrN{\sseq{e}{n}}}. 
\item For any two nodes \m{\neuron_i} and \m{\neuron_j}, 
if \m{\neuron_i} is a descendant of \m{\neuron_j}, then \m{\prt_i\subset \prt_j}. 
\end{compactenum}
\end{definition}

We define a compositional network as a composition scheme   
in which each node \m{\neuron_i} also carries a \emph{feature vector} \m{f_i}  
that provides a representation of the corresponding part (Figure \ref{fig: parts}). 
When we want to emphasize the connection to more classical neural architectures, we will 
refer to \m{\neuron_i} as the \m{i}'th \df{neuron}, 
\m{\prt_i} as its \df{receptive field}\footnote{ Here and in the following 
by the ``receptive field'' of a neuron \m{\neuron_i} in a feed-forward network we mean the set of all 
input neurons from which information can propagate to \m{\neuron_i}.}, 
and \m{f_i} as its \df{activation}. 
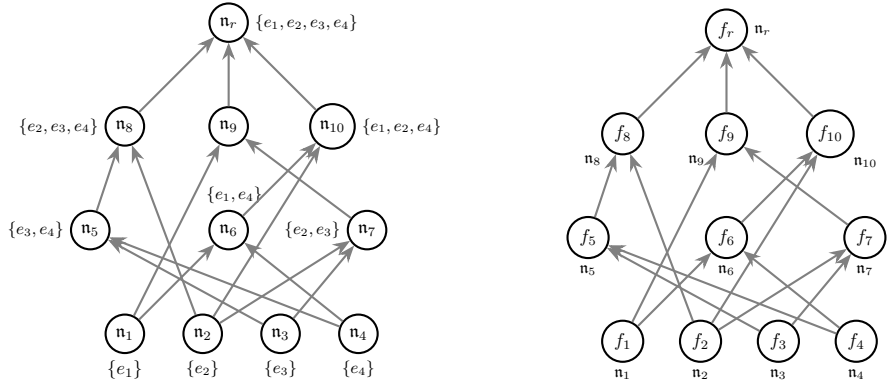
\begin{figure}[t!]
\begin{center}

\begin{tikzpicture}[scale = 0.23, every node/.style={scale=0.7},level/.style={},decoration={brace,mirror,amplitude=7}, >={Stealth[gray]}]

 
\node (0dot1) [cnodesmall,draw=black,thick,label={below: $\{e_1\}$}] at (-6,-4) {$\textcolor{black}{\mathfrak{n}_{1}}$};
\node (0dot2) [cnodesmall,draw=black,thick,label={below: \small $\{e_2\}$}] at (-1.5,-4) {$\textcolor{black}{\mathfrak{n}_{2}}$};
\node (0dot3) [cnodesmall,draw=black,thick,label={below: \small $\{e_3\}$}] at (3,-4) {$\textcolor{black}{\mathfrak{n}_{3}}$};
\node (0dot4) [cnodesmall,draw=black,thick,label={below: \small $\{e_4\}$}] at (7.5,-4) {$\textcolor{black}{\mathfrak{n}_{4}}$};



\node (prdot1) [cnodesmall,draw=black,thick,label={left: \small $\{e_3, e_4\}$}] at (-8,2) {$\textcolor{black}{\mathfrak{n}_{5}}$}; 
\node (prdot2) [cnodesmall,draw=black,thick,label={above: \small $\phantom{L}\{e_1, e_4\}$}] at (0,2) {$\textcolor{black}{\mathfrak{n}_{6}}$};
\node (prdot3) [cnodesmall,draw=black,thick,label={left: \small $\{e_2, e_3\}$}] at (8,2) {$\textcolor{black}{\mathfrak{n}_{7}}$};

\node (trdot1) [cnodesmall,draw=black,thick,label={left: \small $\{e_2, e_3, e_4\}$}] at (-6,8) {$\textcolor{black}{\mathfrak{n}_{8}}$}; 
\node (trdot2) [cnodesmall,draw=black,thick,label={}] at (0,8) {$\textcolor{black}{\mathfrak{n}_{9}}$}; 
\node (trdot3) [cnodesmall,draw=black,thick,label={right: \small $\{e_1, e_2, e_4\}$}] at (6,8) {$\textcolor{black}{\mathfrak{n}_{10}}$}; 

\node (qrdot1) [cnodesmall,draw=black,thick,label={right: \small $\{e_1, e_2, e_3, e_4\}$}] at (0,14) {$\textcolor{black}{ \mathfrak{n}_r}$};

\draw[thick,->, gray,fill=gray] (0dot3) -- (prdot1); \draw[thick,->, gray] (0dot4) -- (prdot1); 
\draw[thick,->, gray] (0dot1) -- (prdot2); \draw[thick,->, gray] (0dot4) -- (prdot2);
\draw[thick,->, gray] (0dot2) -- (prdot3); \draw[thick,->, gray] (0dot3) -- (prdot3); 

\draw[thick,->, gray, fill=gray] (0dot2) -- (trdot1); \draw[thick,->, gray] (prdot1) -- (trdot1);
\draw[thick,->, gray] (0dot2) -- (trdot3); \draw[thick,->, gray] (prdot2) -- (trdot3); 
\draw[thick,->, gray] (0dot1) -- (trdot2); \draw[thick,->, gray] (prdot3) -- (trdot2);

\draw[thick,->, gray] (trdot1) -- (qrdot1); \draw[thick,->, gray] (trdot2) -- (qrdot1);
\draw[thick,->, gray] (trdot3) -- (qrdot1);

\end{tikzpicture}	
\hspace{40pt}
\begin{tikzpicture} [scale=0.23,every node/.style={scale=0.7}, level/.style={},decoration={brace,mirror,amplitude=7}, >={Stealth[gray]}]


\node (0dot1) [cnodesmall,draw=black,thick,label={below: \small $\textcolor{black}{\mathfrak{n}_{1}}$}] at (-6,-4) {$f_1$};
\node (0dot2) [cnodesmall,draw=black,thick,label={below: \small $\textcolor{black}{\mathfrak{n}_{2}}$}] at (-1.5,-4) {$f_2$};
\node (0dot3) [cnodesmall,draw=black,thick,label={below: \small $\textcolor{black}{\mathfrak{n}_{3}}$}] at (3,-4) {$f_3$};
\node (0dot4) [cnodesmall,draw=black,thick,label={below: \small $\textcolor{black}{\mathfrak{n}_{4}}$}] at (7.5,-4) {$f_4$};



\node (prdot1) [cnodesmall,draw=black,thick,label={below: \small $\textcolor{black}{\mathfrak{n}_{5}}$}] at (-8,2) {$f_{5}$}; 
\node (prdot2) [cnodesmall,draw=black,thick,label={below: \small $\textcolor{black}{\mathfrak{n}_{6}}$}] at (0,2) {$f_{6}$};
\node (prdot3) [cnodesmall,draw=black,thick,label={below: \small $\textcolor{black}{\mathfrak{n}_{7}}$}] at (8,2) {$f_{7}$};

\node (trdot1) [cnodesmall,draw=black,thick,label={below left: \small $\textcolor{black}{\mathfrak{n}_{8}}$}] at (-6,8) {$f_8$}; 
\node (trdot2) [cnodesmall,draw=black,thick,label={below left: \small $\textcolor{black}{\mathfrak{n}_{9}}$}] at (0,8) {$f_9$}; 
\node (trdot3) [cnodesmall,draw=black,thick,label={below right: \small $\textcolor{black}{\mathfrak{n}_{10}}$}] at (6,8) {$f_{10}$}; 

\node (qrdot1) [cnodesmall,draw=black,thick,label={right: \small $\textcolor{black}{\mathfrak{n}_{r}}$}] at (0,14) {$\textcolor{black}{f_r}$};

\draw[thick,->, gray] (0dot3) -- (prdot1); \draw[thick,->, gray] (0dot4) -- (prdot1); 
\draw[thick,->, gray] (0dot1) -- (prdot2); \draw[thick,->, gray] (0dot4) -- (prdot2);
\draw[thick,->, gray] (0dot2) -- (prdot3); \draw[thick,->, gray] (0dot3) -- (prdot3); 

\draw[thick,->, gray] (0dot2) -- (trdot1); \draw[thick,->, gray] (prdot1) -- (trdot1);
\draw[thick,->, gray] (0dot2) -- (trdot3); \draw[thick,->, gray] (prdot2) -- (trdot3); 
\draw[thick,->, gray] (0dot1) -- (trdot2); \draw[thick,->, gray] (prdot3) -- (trdot2);

\draw[thick,->, gray] (trdot1) -- (qrdot1); \draw[thick,->, gray] (trdot2) -- (qrdot1);
\draw[thick,->, gray] (trdot3) -- (qrdot1); 
   
\end{tikzpicture}
\end{center}
\caption{\label{fig: parts}
(a) A \emph{composition scheme} for an object \m{\Gcal} is a DAG in which the leaves correspond to atoms, 
the internal nodes correspond to sets of atoms, and the root corresponds to the entire object.  
(b) A \emph{compositional network} is a composition scheme in which each node \m{\neuron_i} 
also carries a feature vector \m{f_i}. The feature vector at \m{\neuron_i} is computed 
from the feature vectors of the children of \m{\neuron_i}. 
}
\end{figure}

\begin{definition}\label{def: part based repr}
Let \m{\Gcal} be a compound object in which each atom \m{e_i} carries a label \m{\labl_i},  
and \m{\Mcal} a composition scheme for \m{\Gcal}. 
The corresponding \df{compositional network} \m{\Ncal} is a DAG with the same structure as \m{\Mcal} 
in which each node \m{\neuron_i} also has an associated feature vector \m{f_i} such that 
\begin{compactenum}[~1.]
\item If \m{\neuron_i} is a leaf node, then \m{f_i=\labl_{\xi(i)}}.
\item If \m{\neuron_i} is a non-leaf node, and its children are \m{\neuron_{c_1},\ldots, \neuron_{c_k}}, then 
\m{f_i=\Phi(f_{c_1},f_{c_2},\ldots, f_{c_k})} for some \df{aggregation function} \m{\Phi}. 
(Note: in general, \m{\Phi} can also depend on the relationships between the subparts, 
but for now, to keep the discussion as simple as possible, we ignore this possibility.)  
\end{compactenum}  
The representation \m{\phi(\Gcal)} afforded by the comp-net 
is given by the feature vector \m{f_{r}} of the root. 
\end{definition}

Note that while, for the sake of concreteness, we call the \m{f_i}'s  ``feature vectors'', there is 
no reason a priori why they need to be vectors rather than some other type of mathematical object. 
In fact, in the second half of the paper we make a point of treating the \m{f_i}'s  
as tensors, because that is what will make it the easiest to describe 
the specific way that they transform with respect to permutations. 
\begin{figure}[t]
\begin{center}
	
\begin{tikzpicture}[scale = 0.2, every node/.style={scale=0.8},level/.style={},decoration={brace,mirror,amplitude=7}, >={Stealth[black]}]

\node (DAGLabel) at (-6,15) {$\mathcal{M}$};

\node (0dot1) [cnodesmall,draw=black,thick,label={left: \normalsize $e_1$}] at (-6,-4) {};
\node (0dot2) [cnodesmall,draw=black,thick,label={left: \normalsize $e_2$}] at (-1.5,-4) {};
\node (0dot3) [cnodesmall,draw=black,thick,label={left: \normalsize $e_3$}] at (3,-4) {};
\node (0dot4) [cnodesmall,draw=black,thick,label={left: \normalsize $e_4$}] at (7.5,-4) {};



\node (prdot1) [cnodesmall,draw=black,thick,label={}] at (-8,2) {\tiny $$}; 
\node (prdot2) [cnodesmall,draw=black,thick,label={}] at (0,2) {\tiny $$};
\node (prdot3) [cnodesmall,draw=black,thick,label={}] at (8,2) {\tiny $$};

\node (trdot1) [cnodesmall,draw=black,thick,label={}] at (-6,8) {\tiny $$}; 
\node (trdot2) [cnodesmall,draw=black,thick,label={}] at (0,8) {\tiny $$}; 
\node (trdot3) [cnodesmall,draw=black,thick,label={}] at (6,8) {\tiny $$}; 

\node (qrdot1) [cnodesmall,draw=black,thick,label={left: \normalsize $\mathfrak{n}_{r}$}] at (0,14) {};

\draw[thick,->, black] (0dot3) -- (prdot1); \draw[thick,->, black] (0dot4) -- (prdot1); 
\draw[thick,->, black] (0dot1) -- (prdot2); \draw[thick,->, black] (0dot4) -- (prdot2);
\draw[thick,->, black] (0dot2) -- (prdot3); \draw[thick,->, black] (0dot3) -- (prdot3); 

\draw[thick,->, black] (0dot2) -- (trdot1); \draw[thick,->, black] (prdot1) -- (trdot1);
\draw[thick,->, black] (0dot2) -- (trdot3); \draw[thick,->, black] (prdot2) -- (trdot3); 
\draw[thick,->, black] (0dot1) -- (trdot2); \draw[thick,->, black] (prdot3) -- (trdot2);

\draw[thick,->, black] (trdot1) -- (qrdot1); \draw[thick,->, black] (trdot2) -- (qrdot1);
\draw[thick,->, black] (trdot3) -- (qrdot1);


\node (DAGLabel2) at (16,15) {$\mathcal{M'}$};

\node (0dot12) [cnodesmall,draw=black,thick,label={right: \normalsize $e_1$}] at (16,-4) {};
\node (0dot22) [cnodesmall,draw=black,thick,label={right: \normalsize $e_4$}] at (20.5,-4) {};
\node (0dot32) [cnodesmall,draw=black,thick,label={right: \normalsize $e_2$}] at (25,-4) {};
\node (0dot42) [cnodesmall,draw=black,thick,label={right: \normalsize $e_3$}] at (29.5,-4) {};

\draw[thin,dashed, ->, blue, bend right] (0dot1) to [out=-20,in=200] (0dot12);

\draw[thin,dashed, ->, blue, bend right] (0dot2) to [out=-20,in=200] (0dot32);

\draw[thin,dashed, ->, blue, bend right] (0dot3) to [out=20,in=160] (0dot42);

\draw[thin,dashed, ->, blue, bend right] (0dot4) to [out=-20,in=120] (0dot22);



\node (prdot12) [cnodesmall,draw=black,thick,label={}] at (14,2) {\tiny $$}; 
\node (prdot22) [cnodesmall,draw=black,thick,label={}] at (22,2) {\tiny $$};
\node (prdot32) [cnodesmall,draw=black,thick,label={}] at (30,2) {\tiny $$};

\node (trdot12) [cnodesmall,draw=black,thick,label={}] at (16,8) {\tiny $$}; 
\node (trdot22) [cnodesmall,draw=black,thick,label={}] at (22,8) {\tiny $$}; 
\node (trdot32) [cnodesmall,draw=black,thick,label={}] at (28,8) {\tiny $$}; 

\node (qrdot12) [cnodesmall,draw=black,thick,label={right: \normalsize $\mathfrak{n'}_{r}$}] at (22,14) {}; 

\draw[thick,->, black] (0dot12) -- (prdot12); \draw[thick,->, black] (0dot22) -- (prdot12); 
\draw[thick,->, black] (0dot32) -- (prdot22); \draw[thick,->, black] (0dot42) -- (prdot22); 
\draw[thick,->, black] (0dot22) -- (prdot12); \draw[thick,->, black] (0dot42) -- (prdot32); \draw[thick,->, black] (0dot22) -- (prdot32);

\draw[thin,dashed, ->, blue, bend right] (prdot2) to [out=20,in=160] (prdot12);
\draw[thin,dashed, ->, blue, bend right] (prdot3) to [out=20,in=160] (prdot22);

\draw[thick,->, black] (0dot32) -- (trdot32); \draw[thick,->, black] (prdot32) -- (trdot32); 
\draw[thin,dashed, ->, blue, bend right] (trdot1) to [out=20,in=160] (trdot32);

\draw[thick,->, black] (prdot22) -- (trdot22); \draw[thick,->, black] (0dot12) -- (trdot22);
\draw[thin,dashed, ->, blue, bend right] (trdot2) to [out=-20,in=200] (trdot22);

\draw[thick,->, black] (prdot12) -- (trdot12); \draw[thick,->, black] (0dot32) -- (trdot12);
\draw[thin,dashed, ->, blue, bend right] (trdot3) to [out=20,in=160] (trdot12);

\draw[thin,dashed, ->, blue] (qrdot1) -- (qrdot12);

\draw[thick,->, black] (trdot12) -- (qrdot12); \draw[thick,->, black] (trdot22) -- (qrdot12); \draw[thick,->, black] (trdot32) -- (qrdot12);

\end{tikzpicture}
\end{center}
\caption{\label{fig: invariant parts}
A minimal requirement for composition schemes is that they be invariant to permutation, 
i.e.~that if the numbering of the atoms is changed by a permutation \m{\sigma}, 
then we must get an isomorphic DAG. Any node in the new DAG that corresponds 
to \m{\cbrN{e'_{i_1},\ldots,e'_{i_k}}} must have a corrresponding node in the 
old DAG corresponding to \m{\cbrN{e_{\sigma^{-1}(i_1)},\ldots,e_{\sigma^{-1}(i_k)}}}. 
}
\end{figure}
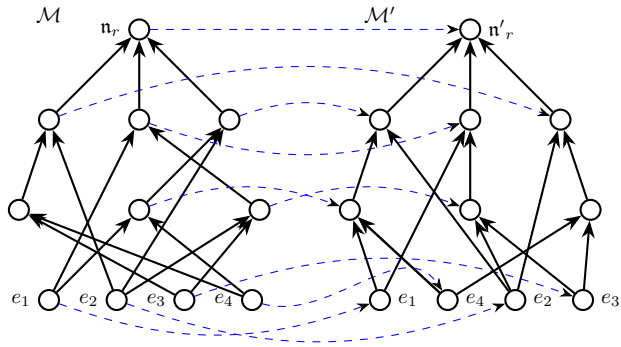 

In compositional networks for graphs, the atoms will usually be the vertices,  
and the \m{\prt_i} parts will correspond to clusters of nodes or neighborhoods of given radii. 
Comp-nets are particularly attractive in this domain because  
they can combine information from the graph at different scales. 
The comp-net formalism also suggests a natural way to satisfy the permutation invariance 
criterion of Definition \ref{def: invariant}. 

\begin{definition}\label{def: invariant parts}
Let \m{\Mcal} be the composition scheme of an object \m{\Gcal} with \m{n} atoms 
and \m{\Mcal'} the composition scheme of 
another object that is equivalent in structure to 
\m{\Gcal}, except that its atoms have been permuted by some permutation \m{\sigma\tin\Sn} 
(\m{e'_i\<=e_{\sigma^{-1}\nts(i)}} and \m{\ell'_i\<=\ell_{\sigma^{-1}\nts(i)}}). 
We say that \m{\Mcal} (more precisely, the algorithm generating \m{\Mcal}) 
is \df{permutation invariant} 
if there is a bijection \m{\psi\colon \Mcal\to\Mcal'} taking each \m{\neuron_a\tin \Mcal} 
to some \m{\neuron'_b\tin\Mcal'} such that if \m{\prt_a\<=\cbrN{e_{i_1},\ldots,e_{i_k}}}, then 
\m{\prt'_b\<=\cbrN{e'_{\sigma(i_1)},\ldots,e'_{\sigma(i_k)} }}. 
\end{definition}

\begin{proposition}\label{prop: invariant reps} 
Let \m{\phi(\Gcal)} be the output of a comp-net based on a composition scheme \m{\Mcal}. 
Assume  
\begin{compactenum}[~1.]
\item \m{\Mcal} is permutation invariant in the sense of Definition \ref{def: invariant parts}.  
\item The aggregation function \m{\Phi(f_{c_1},f_{c_2},\ldots, f_{c_k})} used to compute the feature 
vector of each node from the feature vectors of its children is 
invariant to the permutations of its arguments. 
\end{compactenum}
Then the overall representation \m{\phi(\Gcal)} is invariant to permutations of the atoms. 
In particular, if \m{\Gcal} is a graph  
and the atoms are its vertices, then \m{\phi} is a permutation invariant graph representation.  
\end{proposition}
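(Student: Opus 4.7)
The plan is to proceed by structural induction on the DAG $\mathcal{M}$, working from the leaves up to the root and showing at every stage that the feature vector at a node $\mathfrak{n}_a\in\mathcal{M}$ equals the feature vector at the corresponding node $\psi(\mathfrak{n}_a)\in\mathcal{M}'$. Once this is established for the root, invariance of $\phi$ follows immediately, since $\phi(\mathcal{G})=f_r$ and $\phi(\mathcal{G}')=f'_{\psi(r)}$.

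For the base case, let $\mathfrak{n}_a$ be a leaf node with part $\mathcal{P}_a=\{e_{\xi(a)}\}$. By the permutation invariance of the composition scheme (Definition \ref{def: invariant parts}), its image $\psi(\mathfrak{n}_a)$ has part $\{e'_{\sigma(\xi(a))}\}$, so this node is itself a leaf and corresponds to the atom indexed by $\sigma(\xi(a))$. Using $\ell'_i=\ell_{\sigma^{-1}(i)}$, the associated label is $\ell'_{\sigma(\xi(a))}=\ell_{\xi(a)}$, which is exactly $f_a$. Hence the feature vectors agree at all leaves.

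For the inductive step, suppose $\mathfrak{n}_a$ is an internal node with children $\mathfrak{n}_{c_1},\ldots,\mathfrak{n}_{c_k}$ and that the claim holds for all of them. The key observation is that, because $\psi$ is a bijection of DAGs that respects the part structure (and hence the descendant relation of condition~3), the children of $\psi(\mathfrak{n}_a)$ in $\mathcal{M}'$ are precisely the images $\psi(\mathfrak{n}_{c_1}),\ldots,\psi(\mathfrak{n}_{c_k})$, albeit possibly listed in some permuted order $\pi$. Applying the aggregation rule in $\mathcal{M}'$ and then using the permutation invariance of $\Phi$ gives
\[
f'_{\psi(a)}=\Phi\bigl(f'_{\psi(c_{\pi(1)})},\ldots,f'_{\psi(c_{\pi(k)})}\bigr)=\Phi\bigl(f'_{\psi(c_1)},\ldots,f'_{\psi(c_k)}\bigr),
\]
which by the inductive hypothesis equals $\Phi(f_{c_1},\ldots,f_{c_k})=f_a$. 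Taking $\mathfrak{n}_a$ to be the root completes the argument.

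The only mildly delicate step is the claim that $\psi$ maps the set of children of $\mathfrak{n}_a$ bijectively onto the set of children of $\psi(\mathfrak{n}_a)$. Definition \ref{def: invariant parts} only states that $\psi$ matches parts up to the relabeling $\sigma$; one needs to observe that, since the edges of a composition scheme encode the immediate descendant relation between parts, any part-preserving bijection is automatically a DAG isomorphism. This is the sole place where the proof uses more than bookkeeping, and it is where I would be most careful to spell out the argument explicitly.
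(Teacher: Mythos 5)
Your proof is correct and follows essentially the same route as the paper's: induction from the leaves up the DAG, using the node correspondence $\psi$ together with the permutation invariance of $\Phi$ to propagate the equality $f_a = f'_{\psi(a)}$ to the root. You are in fact somewhat more careful than the paper, which asserts the base case without computing $\ell'_{\sigma(\xi(a))}=\ell_{\xi(a)}$ and tacitly assumes that $\psi$ carries children to children; your flagging of that latter point as the step requiring justification is well taken.
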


\subsection{Message passing neural networks as a special case of comp-nets}\label{sec: MPNN}

Graph learning is not the only domain where invariance and multiscale structure are important: 
the most commonly cited reasons for the success of convolutional 
neural networks (CNNs) in image tasks is their ability to address exactly these two criteria 
in the vision context.  
Furthermore, 
each neuron \m{\neuron_i} in a CNN aggregates information from a small set of neurons 
from the previous layer, therefore its receptive field, 
corresponding to \m{\prt_i}, 
is the union of the receptive fields of its ``children'', 
so we have a hierarchical structure very similar to that described in the previous section. 
In this sense, CNNs are a specific kind of compositional network, where the atoms are pixels. 
This connection has inspired several authors to frame graph learning as a generalization of 
convolutional nets to the graph domain  
\citep{BrunaZaremba2014,HenaffLeCun2015,Duvenaud2015, Defferrard2016, KipfWelling2017}. 
While in mathematics convolution has a fairly specific meaning that is side-stepped by this analogy,  
the CNN analogy does suggest that a natural way to define the \m{\Phi} aggregation functions 
is to let \m{\Phi(f_{c_1},f_{c_2},\ldots, f_{c_k})} be a linear function of 
\m{f_{c_1},f_{c_2},\ldots, f_{c_k}} followed by a pointwise nonlinearity, such as a ReLU operation. 

To define a comp-net for graphs we also need to specify the composition scheme \m{\Mcal}.  
Many algorithms define \m{\Mcal} in layers, where each layer 
(except the last) has one node 
for each vertex \mbox{of \m{G}:} 
\begin{compactenum}[~\Mcal 1.]
\item In layer \m{\ell\<=0} each node \m{\neuron^0_i} represents the single vertex \m{\prt^0_i=\cbrN{i}}. 
\item In layers \m{\ell=1,2,\ldots,L}, node \m{\neuron^\ell_i} is connected to all nodes from the 
previous level that are neighbors of \m{i} in \m{G}, i.e., the children of \m{\neuron^\ell_i} are 
\[\textrm{ch}(\neuron^\ell_i)=\setofN{\neuron^{\ell-1}_j}{j\tin \Ncal(i)},\]
where \m{\Ncal(i)} denotes the set of neighbors of \m{i} in \m{G}. 
Therefore, \m{\prt^\ell_i=\bigcup_{j\in\Ncal(i)}\prt^{\ell-1}_j}. 
\item In layer \m{L\<+1} we have a single node \m{\neuron_r} that represents the entire graph and 
collects information from all nodes at level \m{L}.
\end{compactenum}
Since this construction only depends on topological information about \m{G}, the resulting 
composition scheme is guaranteed to be permutation invariant in the sense of Definition \ref{def: invariant parts}. 

A further important consequence of this way of defining \m{\Mcal} 
is that the resulting comp-net can be equivalently interpreted as \emph{label propagation algorithm}, 
where in each round \m{\ell\<=1,2,\ldots,L}, each vertex aggregates information from its neighbors 
and then updates its own label.  

\begin{algorithm}[H]
\begin{algorithmic} 
\State \texttt{for each vertex }\m{i} 
\State \phantom{MM}\m{f^0_i\leftarrow \labl_i}
\State \texttt{for}\, \m{\ell\<=1}\, \texttt{to}\, \m{L}  
\State \phantom{MM}\texttt{for each vertex }\m{i} 
\State \phantom{MMMM}\m{f^\ell_i\leftarrow \Phi(f^{\ell-1}_{i_1},\ldots,f^{\ell-1}_{i_k})} 
\texttt{ where } \m{\Ncal(i)=\cbrN{\sseq{i}{k}}}
\State \m{\phi(G)\equiv f_r\leftarrow \Phi(f^L_1,\ldots,f^L_n)} 
\end{algorithmic}
\caption{\label{alg: propagation}
The label propagation algorithm corresponding to \Mcal 1--\Mcal 3}
\end{algorithm}

Many authors choose to describe graph neural networks exclusively in terms of label propagation, 
without mentioning the compositional aspect of the model. 
\citet{Gilmer2017} call this general approach \emph{message passing neural networks}, and point out 
that a range of different graph learning architectures are special cases of it. 
More broadly, the classic Weisfeiler--Lehman test of isomorphism also follows the same logic 
\citep{WL1968,Read1977, CaiImmerman1992}, 
and so does the related Weisfeiler--Lehman kernel, arguably the most successful 
kernel-based approach to graph learning \citep{Shervashidze2011}. 
Note also that in label propagation or message passing algorithms 
there is a clear notion of the \emph{source domain} 
of vertex \m{i} at round \m{\ell}, as the set of vertices that can influence \m{f^\ell_i}, and 
this corresponds exactly  
to the receptive field \m{\prt^\ell_i} of ``neuron'' \m{\neuron^\ell_i} in the comp-net picture. 

The following proposition is immediate from the form of Algorithm \ref{alg: propagation} and reassures 
us that message passing neural networks, as special cases of comp-nets, 
do indeed produce permutation invariant representations of graphs. 

\begin{proposition}
Any label propagation scheme in which the aggregation function \m{\Phi} is invariant to the permutations 
of its arguments is invariant to permutations in the sense of Definition \ref{def: invariant}. 
\end{proposition}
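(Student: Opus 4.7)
The plan is to recognize Algorithm \ref{alg: propagation} as a comp-net whose composition scheme is the one specified by \m{\Mcal 1}--\m{\Mcal 3}, and then invoke Proposition \ref{prop: invariant reps}. That proposition has two hypotheses: permutation invariance of the composition scheme in the sense of Definition \ref{def: invariant parts}, and permutation invariance of the aggregation function \m{\Phi}. The second hypothesis is exactly what the present statement assumes, so the real work is to verify the first.

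To check that the scheme \m{\Mcal} built from \m{\Mcal 1}--\m{\Mcal 3} is permutation invariant, I would construct the required bijection \m{\psi\colon\Mcal\to\Mcal'} layer by layer, setting \m{\psi(\neuron^\ell_i)\<={\neuron'}^\ell_{\sigma(i)}} for every \m{\ell\tin\cbrN{0,1,\ldots,L}} and every vertex \m{i}, and \m{\psi(\neuron_r)\<=\neuron'_r} for the root. What must be checked is that the receptive fields transform as required by Definition \ref{def: invariant parts}: whenever \m{\prt^\ell_i\<=\cbrN{e_{j_1},\ldots,e_{j_k}}}, one has \m{{\prt'}^\ell_{\sigma(i)}\<=\cbrN{e'_{\sigma(j_1)},\ldots,e'_{\sigma(j_k)}}}. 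I would prove this by induction on \m{\ell}. The base case \m{\ell\<=0} follows directly from \m{\Mcal 1}, since \m{\prt^0_i\<=\cbrN{e_i}} and \m{{\prt'}^0_{\sigma(i)}\<=\cbrN{e'_{\sigma(i)}}}. The inductive step rests on the fact that the neighbor relation is preserved under relabeling, i.e.\ \m{j\tin\Ncal_G(i)} if and only if \m{\sigma(j)\tin\Ncal_{G'}(\sigma(i))}, because by construction \m{A'_{\sigma(i),\sigma(j)}\<=A_{i,j}}. Combined with the recursion \m{\prt^\ell_i\<=\bigcup_{j\in\Ncal(i)}\prt^{\ell-1}_j} from \m{\Mcal 2} and the inductive hypothesis, this yields the identity at level \m{\ell}; the root case \m{\Mcal 3} is handled identically, with the union ranging over all vertices.

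With both hypotheses of Proposition \ref{prop: invariant reps} in place, the proposition applies directly and yields \m{\phi(G)\<=\phi(G')}, which is exactly the invariance of Definition \ref{def: invariant}. I do not expect any substantive obstacle in this argument: the construction \m{\Mcal 1}--\m{\Mcal 3} depends only on the edge structure of \m{G} and is therefore tautologically equivariant under renumbering of the vertices. The only step worth flagging explicitly is the root aggregation in \m{\Mcal 3}, where \m{\Phi} is applied to all \m{n} top-level feature vectors at once; here the hypothesis that \m{\Phi} is invariant to permutations of its arguments is precisely what is needed to guarantee that the final output does not depend on the order in which those vectors are supplied.
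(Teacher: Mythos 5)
Your proposal is correct and follows essentially the same route as the paper, which declares the proposition ``immediate'' by viewing the label propagation scheme as a comp-net built on the \m{\Mcal 1}--\m{\Mcal 3} composition scheme (whose permutation invariance the paper asserts from the fact that it depends only on topological information) and then applying Proposition \ref{prop: invariant reps}. You simply make explicit the layerwise bijection and the induction that the paper leaves unstated; no discrepancy.
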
 

In the next section we argue that invariant message passing networks are limited in their representation 
power, however, and describe a generalization via comp-nets that overcomes some of these limitations. 
\section{Covariant compositional networks}

One of the messages of the present paper is that invariant message passing algorithms, 
of the form described in the previous section, 
are \emph{not} the most general possible compositional models 
for producing permutation invariant representations of graphs (or of compound objects, in general). 

Once again, an analogy with image recognition is helpful. 
Classical CNNs face two types of basic image transformations: translations and rotations. 
With respect to translations (barring pooling, edge effects and other complications), 
CNNs behave in a quasi-invariant way, in the sense that if the input image is translated by any integer amount  
\m{(t_x,t_y)}, the activations in each layer \m{\ell=1,2,\ldots L} 
translate the same way: 
the activation of any neuron \m{\neuron^\ell_{i,j}} is simply transferred to 
neuron \m{\smash{\neuron^\ell_{i+t_1,j+t_2}}}, 
i.e., \m{\smash{{f'}{}^{\ell}_{i+t_1,j+t_2}\!\!\<=f^\ell_{i,j}}}. 
This is the simplest manifestation of a well studied property of CNNs called \emph{equivariance}  
\citep{CohenWelling2016, Worrall2017}. 

\begin{figure}[t]
\vspace{-20pt}
\begin{center}
\includepic{.4}{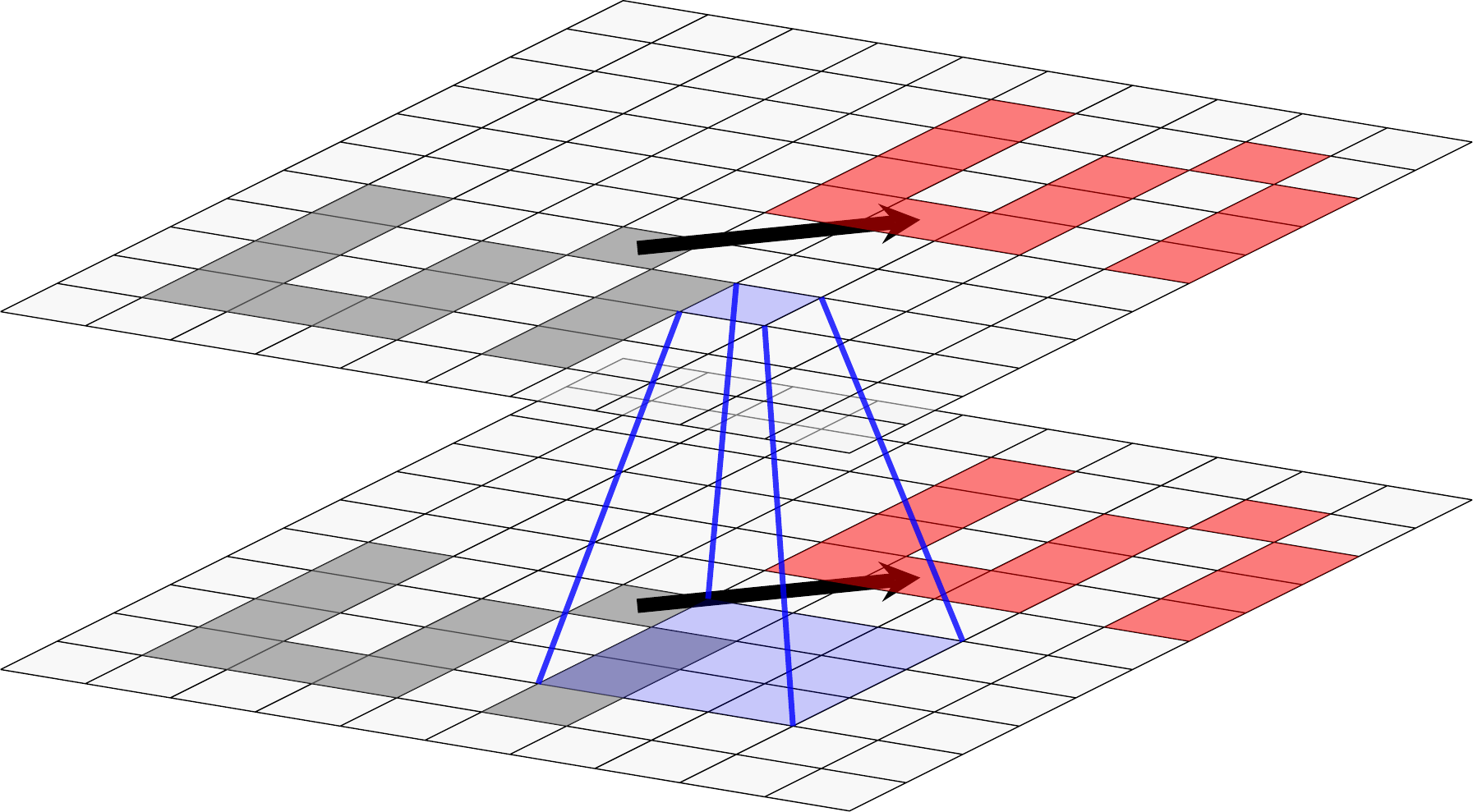}
\includepic{.4}{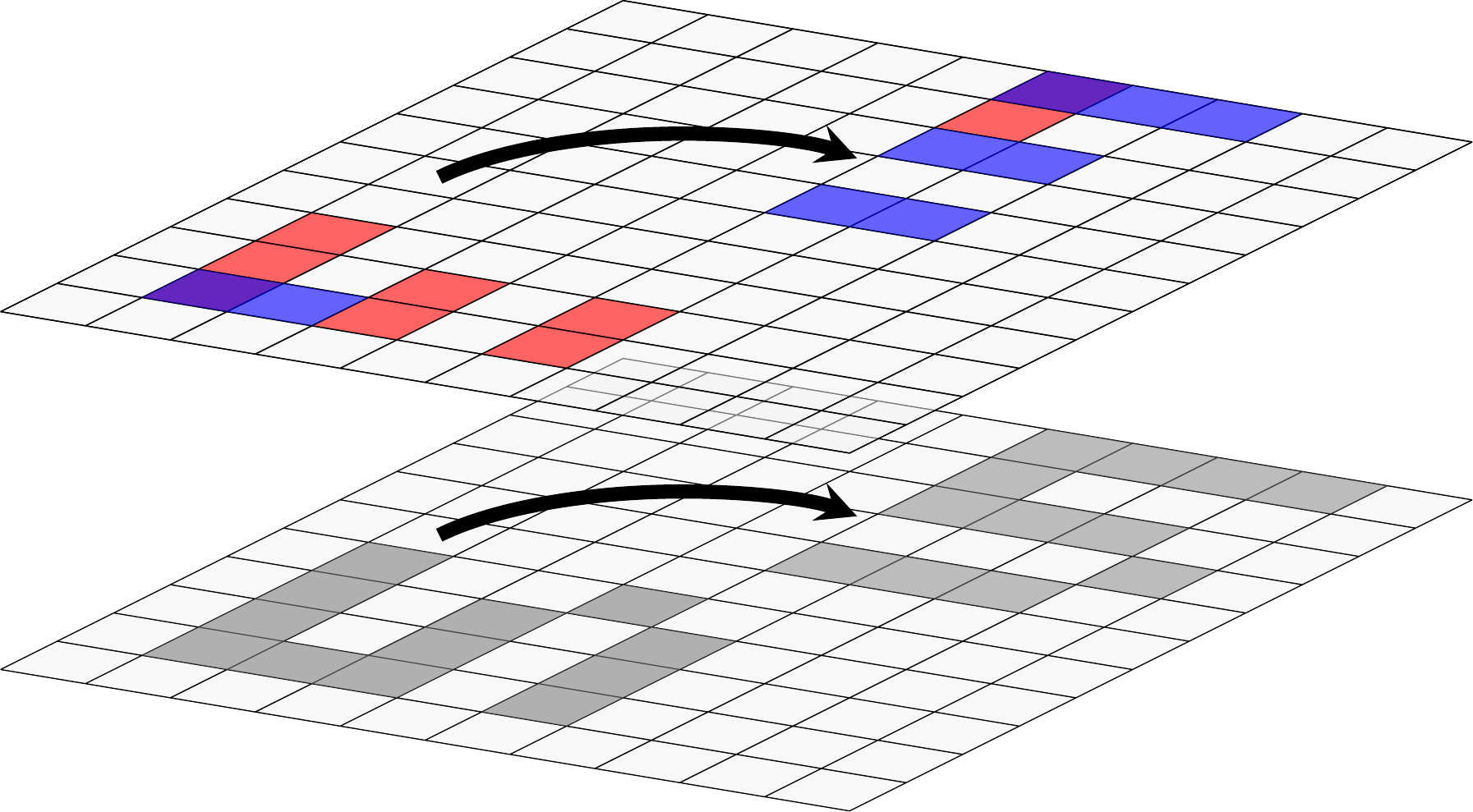}
\end{center}
\caption{\label{fig: sterring} 
In convolutional neural networks if the input image is translated by some amount \m{(t_1,t_2)}, what 
used to fall in the receptive field of neuron \m{\neuron^\ell_{i,j}} is moved to the 
receptive field of \m{\smash{\neuron^\ell_{i+t_1,j+t_2}}}. 
Therefore, the activations transform in the very simple way \m{\smash{f'{}^\ell_{i+t_1,j+t_2}=f^\ell_{i,j}}}. 
In contrast, rotations not only move the receptive fields around, but also permute the neurons in 
the receptive field internally, therefore, in general, \m{\smash{{f'}{}^\ell_{j,-i}\!\neq\! f^\ell_{i,j}}}. 
The right hand figure shows that if the CNN has a horizontal filter (blue) 
and a vertical one (red) then their activations are exchanged by a 90 degree rotation. 
In steerable CNNs,~ if \m{(i,j)\mapsto (i',j')}, then \m{{f'}{}^\ell_{i',j'}\<= R(f^\ell_{i,j})} for some fixed linear function of the 
rotation. 
}
\end{figure} 
With respect to rotations, however, the situation is more complicated: if we rotate the input image 
by, e.g., \m{90} degrees, not only will the part of the image that fell in the 
receptive field of a particular neuron \m{\neuron^\ell_{i,j}} move to the receptive field of a different 
neuron \m{\neuron^{\ell}_{j,-i}}, but the orientation of the receptive field will also change  
(Figure \ref{fig: sterring}). 
Consequently, features which were, for example, previously picked up by horizontal filters will now be picked 
up by vertical filters.  
Therefore, in general, \m{{f'}{}^\ell_{j,-i}\!\neq\! f^\ell_{i,j}}. 
It can be shown that one cannot construct a CNN for images that behaves 
in a quasi-invariant way with respect to both translations and rotations unless every filter  
is directionless.  

It is, however, possible to construct a CNN in which the activations transform in a 
predictable and reversible way,  
in particular, \m{{f'}{}^\ell_{j,-i}\<= R(f^\ell_{i,j})} for some fixed invertible function \m{R}.  
This phenomenon is called \emph{steerability}, and has a significant literature in 
both classical signal processing \citep{FreemanAdelson1991,  Simocelli1992, Perona1995, Teo1998, Manduchi1998} and the neural networks field \citep{CohenWelling2017}. 

The situation in compositional networks is similar. 
The comp-net and message passing architectures that we have examined so 
far, by virtue of the aggregation function being symmetric in its arguments, are all  
\emph{quasi-invariant} (with respect to permutations) in the following sense.

\begin{definition}\label{def: invariant compnet}
Let \m{\Gcal} be a compound object of \m{n} parts and \m{\Gcal'} an equivalent object 
in which the atoms have been permuted by some permutation \m{\sigma}. 
Let \m{\Ncal} be a comp-net for \m{\Gcal} based on an invariant composition scheme, 
and \m{\Ncal'} be the corresponding network for \m{\Gcal'}. 
We say that \m{\Ncal} is \df{quasi-invariant} if for any \m{\neuron_i\tin \Ncal}, 
letting \m{\neuron'_j} be the corresponding node in \m{\Ncal'}, \m{f_i\<=f'_j} 
for any \m{\sigma\tin\Sn} 
\end{definition}

Quasi-invariance in comp-nets is equivalent to the assertion that the activation 
\m{f_i} at any given node must only depend on \m{\prt_i=\cbrN{e_{j_1},\ldots,e_{j_k}}} as a \emph{set}, 
and not on the internal ordering of the atoms \m{e_{j_1},\ldots,e_{j_k}} making up the receptive field. 
At first sight this seems desirable, since it is exactly what we expect from the overall representation \m{\phi(G)}. 
On closer examination, however, we realize that this property is potentially problematic, since it 
means that \m{\neuron_i} has 
lost all information about which vertex in its receptive field 
has contributed what to the aggregate information \m{f_i}. 
In the CNN analogy, we can say that we have lost information about the \emph{orientation} of the receptive field.  
In particular, if, further upstream, \m{f_i} is combined with some other   
feature vector \m{f_j} from a node with an overlapping receptive field, 
the aggregation process has no way of taking into account which parts of the information in 
\m{f_i} and \m{f_j} come from shared vertices and which parts do not (Figure \ref{fig: aggregation}). 
\input{fig-aggregation} 

The solution is to upgrade the \m{\prt_i} receptive fields to be \emph{ordered sets}, 
and explicitly establish how \m{f_i} co-varies with the internal ordering of the receptive fields.  
To emphasize that henceforth the \m{\prt_i} sets are ordered, 
we will use parentheses rather than braces to denote their content. 

\begin{definition}\label{def: steerable compnet} 
Let \m{\Gcal}, \m{\Gcal'}, \m{\Ncal} and \m{\Ncal'} be as in Definition \ref{def: invariant compnet}. 
Let \m{\neuron_i} be any node of \m{\Ncal} and \m{\neuron_j} the corresponding node of \m{\Ncal'}. 
Assume that \m{\prt_i=\brN{e_{p_1},\ldots,e_{p_m}}} while \m{\prt'_j=\brN{e_{q_1},\ldots,e_{q_m}}}, 
and let \m{\pi\tin\Sbb_m} be the permutation that 
aligns the orderings of the two receptive fields, i.e., for which \m{e_{q_{\pi(a)}}\!\<=e_{p_a}}.  
We say that \m{\Ncal} is \df{covariant to permutations} if for any \m{\pi}, there is a 
corresponding function \m{R_\pi} such that \m{f'_j=R_{\pi}(f_i)}.  
\end{definition}

\subsection{First order covariant comp-nets}

The form of covariance prescribed by Definition \ref{def: steerable compnet} is very general. 
To make it more specific, in line with the classical literature on steerable representations, 
we make the assumption that the \m{\cbrN{f\mapsto R_\pi(f)}_{\pi\in\Sbb_m}} maps are  \emph{linear}, 
and by abuse of notation, from now on simply treat them as matrices (with \m{R_\pi(f)=R_\pi f}).  
The linearity assumption automatically implies that \m{\cbrN{R_\pi}_{\pi\in \Sbb_m}} 
is a \emph{representation} of \m{\Sbb_m} in the group theoretic 
sense of the word  (for the definition of group representations, see the Appendix)\footnote{
This notion of representation must not be confused with the neural networks sense of representations 
of objects, as in ``\m{f^\ell_i} is a representation of \m{\Pcal^\ell_i}''}. 

\begin{proposition}\label{prop: steerability} 
If for any \m{\pi\tin\Sbb_m}, the \m{f\mapsto R_\pi(f)} map appearing in Definition \ref{def: steerable compnet} 
is linear, then the corresponding \m{\cbrN{R_\pi}_{\pi\in\Sbb_m}} matrices form a representation of \m{\Sbb_m}. 
\end{proposition}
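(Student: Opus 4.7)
The plan is to verify that the family $\cbrN{R_\pi}_{\pi\in\Sbb_m}$ satisfies the two defining axioms of a group representation, namely $R_e\<=I$ (where $e$ is the identity permutation) and $R_{\pi_1\pi_2}\<=R_{\pi_1}R_{\pi_2}$ for all $\pi_1,\pi_2\tin\Sbb_m$. The linearity hypothesis reduces the problem to showing that these identities hold when applied to an arbitrary activation vector $f$, so the entire argument is driven by choosing three comp-nets related by compatible re-orderings of a single receptive field and invoking Definition \ref{def: steerable compnet} twice.

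First I would dispose of the identity: take $\mathcal{G}'\<=\mathcal{G}$ (i.e.\ $\sigma$ is the identity in $\Sbb_n$), in which case the aligning permutation $\pi$ at neuron $\neuron_i$ is $e\tin\Sbb_m$ and $\neuron_j$ is literally $\neuron_i$, so $f'_j\<=f_i$; Definition \ref{def: steerable compnet} then forces $R_e f\<= f$ for every $f$, which by linearity gives $R_e\<=I$.

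For the multiplicativity, I would fix a neuron $\neuron_i$ of $\Ncal$ with ordered receptive field $\prt_i\<=\brN{e_{p_1},\ldots,e_{p_m}}$ and consider two further comp-nets $\Ncal'$ and $\Ncal''$, obtained from $\Ncal$ by permuting the atoms of $\Gcal$ by $\sigma_1$ and by $\sigma_2\sigma_1$, respectively. Let $\neuron_j\tin\Ncal'$ and $\neuron_k\tin\Ncal''$ be the nodes corresponding to $\neuron_i$ under the invariance of the composition scheme, with ordered receptive fields $\prt'_j$ and $\prt''_k$. Denote by $\pi_1$ the aligning permutation between $\prt_i$ and $\prt'_j$, by $\pi_2$ the aligning permutation between $\prt'_j$ and $\prt''_k$, and by $\pi_{12}$ the aligning permutation between $\prt_i$ and $\prt''_k$. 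The main technical step is to check, from the alignment relation $e_{q_{\pi(a)}}\<=e_{p_a}$ in Definition \ref{def: steerable compnet}, that $\pi_{12}\<=\pi_2\pi_1$; this is a straightforward bookkeeping of which positions in the three ordered tuples hold the same atom, and it is the one place where one has to be careful about which composition order matches the convention in the definition.

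Once the identification $\pi_{12}\<=\pi_2\pi_1$ is in hand, Definition \ref{def: steerable compnet} applied to the pairs $(\Ncal,\Ncal')$, $(\Ncal',\Ncal'')$ and $(\Ncal,\Ncal'')$ gives $f'_j\<=R_{\pi_1}f_i$, $f''_k\<=R_{\pi_2}f'_j$ and $f''_k\<=R_{\pi_{12}}f_i$, so composing the first two yields $R_{\pi_2\pi_1}f_i\<= R_{\pi_2}R_{\pi_1}f_i$. Since the choice of $f_i$ is unconstrained (we may let the underlying vertex labels of $\Gcal$ vary arbitrarily, which lets $f_i$ range over its whole ambient space), linearity of the $R_\pi$ then upgrades this pointwise identity to the matrix identity $R_{\pi_2\pi_1}\<=R_{\pi_2}R_{\pi_1}$, completing the proof. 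The only obstacle worth flagging is the $\pi_{12}\<=\pi_2\pi_1$ bookkeeping, and a secondary subtlety is justifying that $f_i$ really does sweep out enough of its vector space to let us cancel it — which is immediate if $R_\pi$ is interpreted as a matrix acting on all possible activations rather than a single fixed value.
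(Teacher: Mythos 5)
Your proposal is correct and follows essentially the same route as the paper's own proof: both arguments introduce a third permuted copy of the object, show that the alignment permutation between the first and third receptive fields is the composition of the two intermediate alignments (your $\pi_{12}=\pi_2\pi_1$ is exactly the paper's $\tau=\sigma\pi$ computation), and then read off $R_{\pi_2\pi_1}f=R_{\pi_2}R_{\pi_1}f$ from two applications of Definition~\ref{def: steerable compnet}. Your additional checks --- verifying $R_e=I$ and noting that $f_i$ must sweep out enough of the ambient space to cancel it --- are harmless refinements the paper leaves implicit.
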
 

The representation theory of symmetric groups is a rich subject that goes beyond the scope of the present 
paper \citep{Sagan}. 
However, there is one particular representation of \m{\Sbb_m} that is likely familiar even to non-algebraists, the 
so-called \emph{defining representation}, given by the \m{P_\pi\tin\RR^{n\times n}} 
permutation matrices 
\[[P_\pi]_{i,j}=
\begin{cases}
~1&\Tif \ {\pi(j)\<=i}\\
~0&\Totherwise. 
\end{cases}
\]
It is easy to verify that 
\m{P_{\pi_2\pi_1}\<=P_{\pi_2} P_{\pi_1}} for any \m{\pi_1,\nts\pi_2\tin\Sbb_m}, so \m{\cbrN{P_\pi}_{\pi\in\Sbb_m}} 
is indeed a representation of \m{\Sbb_m}.   
If the transformation rules of the \m{f_i} activations in a given comp-net are dictated by 
this representation, then each \m{f_i} must necessarily be a \m{\absN{\Pcal_i}} dimensional vector, 
and intuitively each component of \m{f_i} carries information related to one specific 
atom in the receptive field, or the interaction of that specific atom with all the others.  
We call this case \df{first order permutation covariance}.  

\begin{definition}\label{def: 1st order}
We say that \m{\neuron_i} is a \df{first order covariant node} in a comp-net if under the permutation of 
its receptive field \m{\prt_i} by any \m{\pi\tin\Sbb_{\abs{\nts\prt_i\nts}}}, 
its activation trasforms as \m{f_i\mapsto P_\pi f_i}. 
\end{definition}


\subsection{Second order covariant comp-nets}\label{sec: second order}

It is easy to verify that given any representation \m{\smash{\brN{R_g}_{g\in \mathfrak{G}}}} of a group \m{\mathfrak{G}},  
the matrices \m{\brN{R_g\<\otimes R_g}_{g\in\mathfrak{G}}} also furnish a representation of \m{\mathfrak{G}}. 
Thus, one step up in the hierarchy from \m{P_\pi}--covariant comp-nets are 
\m{P_\pi\<\otimes P_\pi}--covariant comp-nets, where the \m{f_i} feature vectors are now 
\m{\abs{\prt_i}^2} dimensional vectors that transform under permutations of the internal ordering 
by \m{\pi} as \m{f_i\mapsto (P_\pi\<\otimes P_\pi) f_i}. 

If we reshape \m{f_i} into a matrix 
\m{F_i\tin\RR^{\absN{\prt_i}\times\absN{\prt_i}}}, then the action 
\begin{equation*}\label{eq: 2nd action}
F_i\mapsto P_\pi\ts F_i\ts P_\pi^\top 
\end{equation*} 
is equivalent to \m{P_\pi\<\otimes P_\pi} acting on \m{f_i}. In the following, we will prefer this 
more intuitive matrix view, since it clearly expresses that feature vectors that transform this way 
express \emph{relationships} between the different constituents of the receptive field. 
Note, in particular, that if we define \m{A\tdown_{\prt_i}} as the restriction of the adjacency matrix to 
\m{\prt_i} (i.e., if \m{\prt_i=\brN{e_{p_1},\ldots,e_{p_m}}} then \m{[A\tdown_{\prt_i}]_{a,b}=A_{p_a,p_b}}), 
then \m{A\tdown_{\prt_i}} transforms exactly as \m{F_i} does in the equation above. 

\begin{definition}\label{def: 2nd order}
We say that \m{\neuron_i} is a \df{second order covariant node} in a comp-net 
if under the permutation of its receptive field \m{\prt_i} by any \m{\pi\tin\Sbb_{\abs{\nts\prt_i\nts}}},  
its activation transforms as \m{F_i\mapsto P_\pi\ts F_i\ts P_\pi^\top}. 
\end{definition}

\subsection{Third and higher order covariant comp-nets}

Taking the pattern further lets us consider third, fourth, and general, \m{k}'th order nodes in our comp-net, 
in which the activations are \m{k}'th order tensors, transforming under permutations as 
\begin{equation*}%
F_i\mapsto F_i'\qquad \textrm{where}\qquad 
[F'_i]_{\sseq{j}{k}}=
\sum_{j'_1}\sum_{j'_2}\ldots \sum_{j'_k} [P_\pi]_{j_1,j'_1} [P_\pi]_{j_2,j'_2}  \ldots [P_\pi]_{j_k,j'_k} 
[F_i]_{\sseq{j'}{k}},  
\end{equation*}
In the more compact, so called Einstein notation\footnote{The Einstein convention is that if, in a 
given tensor expression the same index appears twice, once ``upstairs'' and once ``downstairs'', then 
it is summed over. For example, the matrix/vector product \m{y\<=Ax} would be written \m{y_i\<=A_i^{\;j} x_j}}, 
\begin{equation}\label{eq: kth action}
[F'_i]_{\sseq{j}{k}}=[P_\pi]^{\phantom{A}j'_1}_{j_1}\, [P_\pi]^{\phantom{A}j'_2}_{j_2}\:  \ldots\: 
[P_\pi]^{\phantom{A}j'_k}_{j_k}\:[F_i]_{\sseq{j'}{k}}.  
\end{equation}
In general, we will call any quantity which transforms according to this equation a \df{k'th order P-tensor}. 
Note that this notion of tensors is distinct from the common usage of the term in neural networks, 
and more similar to how the word is used in Physics, 
because it not only implies that \m{F_i} is a quanity representable by an \m{m\<\times m\<\times\ldots\times m} 
array of numbers, but also that \m{F_i} transforms in a specific way. 

Since scalars, vectors and matrices can be considered as \m{0^\text{th}}, \m{1^\text{st}} and \m{2^\text{nd}} order 
tensors, respectively, 
the following definition covers Definitions \ref{def: invariant compnet}, \ref{def: 1st order} 
and \ref{def: 2nd order} as special cases (with quasi-invariance being equivalent to zeroth order equivariance). 
To unify notation and terminology, regardless of the dimensionality, 
in the following we will always talk about \emph{feature tensors} rather than \emph{feature vectors},   
and denote the activations with \m{F_i} rather than \m{f_i}, as we did in the first half of the paper. 

\begin{definition}
We say that \m{\neuron_i} is a \df{k'th order covariant node} in a comp-net   
if the corresponding activation \m{F_i} is a \m{k}'th order \m{P}--tensor, i.e., it 
transforms under permutations of \m{\prt_i} according to \rf{eq: kth action}, 
or the activation is a sequence of \m{c} separate \m{P}--tensors \m{\smash{F_i^{(1)},\ldots, F_i^{(c)}}} 
corresponding to \m{c} distinct channels. 
\end{definition}

\section{Tensor aggregation rules}

The previous sections prescribed how activations must transform in comp-nets of different orders, 
but did not explain how this can be assurred, and what it entails for the \m{\Phi} 
aggregation functions. 
Fortunately, tensor arithmetic provides a compact framework for deriving the general 
form of these operations. 
Recall the four basic operations that can be applied to tensors\footnote{
Here and in the following \m{\Tcal^k} will denote the class of \m{k}'th order tensors (\m{k} dimensional tensors), 
regardless of their transformation properties. 
}:
\begin{compactenum}[~1.]
\item The \df{tensor product} of \m{A\tin\Tcal^k} with \m{B\tin\Tcal^p} yields a tensor\,  
\m{C\<=A\<\otimes B\tin \Tcal^{p+k}}\, where  
\[C_{\seq{i}{k+p}}=A_{\seq{i}{k}}\: B_{i_{k+1},i_{k+2},\ldots,i_{k+p}}.\]
\item The \df{elementwise product} of \m{A\tin\Tcal^k} with \m{B\tin\Tcal^p} along 
dimensions \m{(\seq{a}{p})} yields a tensor\,  
\m{\smash{C\<=A\<\odot_{(\sseq{a}{p})} B\tin \Tcal^{k}}}\, where  
\[C_{\seq{i}{k}}=A_{\seq{i}{k}}\: B_{i_{a_1},i_{a_2},\ldots,i_{a_p}}.\]
\item The \df{projection (summation)} of \m{A\tin\Tcal^k} along dimensions 
\m{\cbrN{\seq{a}{p}}} yields a tensor 
\m{C\<=A\tdown_{\sseq{a}{p}}\tin\Tcal^{k-p}} with 
\[C_{\seq{i}{k}}=\sum_{i_{a_1}} \sum_{i_{a_2}}\ldots  \sum_{i_{a_p}} A_{\seq{i}{k}},\]
where we assume that \m{i_{a_1},\ldots,i_{a_p}} have been removed from amongst the indices of \m{C}.
\item The \df{contraction} of \m{A\tin\Tcal^k} along the pair of dimensions \m{\cbrN{a,b}} 
(assuming \m{a\<<b)} yields a \m{k\<-2} order tensor 
\begin{equation*}
C_{\seq{i}{k}}=
\sum_j A_{i_1,\ldots,i_{a-1},j,i_{a+i},\ldots,i_{b-1},j,i_{b+1},\ldots,k},
\end{equation*}
where again we assume that \m{i_a} and \m{i_b} have been removed from amongst the indices of \m{C}. 
Using Einstein notation this can be written much more compactly as 
\[C_{\seq{i}{k}}=
A_{\seq{i}{k}} \delta^{i_a,i_b},
\]
where \m{\delta^{i_a,i_b}} is the diagonal tensor with \m{\delta^{i,j}\<=1} if \m{i\<=j} and \m{0} otherwise. 
In a somewhat unorthodox fashion, we also generalize contractions to (combinations of) larger sets of indices  
\m{\cbrN{\cbrN{\sseq{a^1}{p_1}},\cbrN{\sseq{a^2}{p_2}},\ldots,\cbrN{\sseq{a^q}{p_q}}}} as 
the \m{(k\<-\sum_{j}p_j)} order tensor 
\[C_{\ldots}=A_{\seq{i}{k}}\, \delta^{\sseq{a^1}{p_1}}\, \delta^{\sseq{a^2}{p_2}}\,\ldots\,
\delta^{\sseq{a^q}{p_q}}.
\]
Note that this subsumes projections, since it allows us to write \m{A\tdown_{\sseq{a}{p}}} in the slightly 
unusual looking form 
\[A\tdown_{\sseq{a}{p}}=A_{\seq{i}{k}}\, \delta^{i_{a_1}}\,\delta^{i_{a_2}}\,\ldots\,\delta^{i_{a_k}}.\] 
\end{compactenum}

The following proposition shows that, remarkably, all of the above operations (as well as taking linear 
conbinations) preserve the way that 
\m{P}--tensors behave under permutations and thus they can be freely ``mixed and matched'' within \m{\Phi}.

\begin{proposition}\label{prop: contractions}
Assume that \m{A} and \m{B} are \m{k}'th and \m{p}'th order \m{P}--tensors, respectively. Then
\begin{compactenum}[~~1.]
\item \m{A\< \otimes B}\; is a \m{k\<+p}'th order \m{P}--tensor. 
\item \m{\smash{A\<\odot_{(\sseq{a}{p})} B}}\; is a \m{k}'th order \m{P}--tensor. 
\item \m{A\tdown_{\sseq{a}{p}}}\; is a \m{k\<-p}'th order \m{P}--tensor.
\item \m{\smash{A_{\seq{i}{k}}\, \delta^{\sseq{a^1}{p_1}}\ldots\delta^{\sseq{a^q}{p_q}}}}\; is a 
\m{k\<-\sum_j p_j}'th order \m{P}--tensor.
\end{compactenum}
In addition, if \m{\sseq{A}{u}} are \m{k}'th order \m{P}--tensors and \m{\sseq{\alpha}{u}} are scalars, then   
\m{\smash{\sum_j \alpha_j A_j}} is a \m{k}'th order \m{P}--tensor. 
\end{proposition}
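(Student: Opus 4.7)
The plan is to verify each claim by direct computation from the transformation law \eqref{eq: kth action}, exploiting two elementary identities satisfied by the permutation matrices: first, that each row and column of $P_\pi$ sums to $1$, i.e.\ $\sum_j [P_\pi]^{\,j}_i=1$ and $\sum_i [P_\pi]^{\,j}_i=1$; and second, that $P_\pi$ is orthogonal, so $[P_\pi]^{\,a}_i [P_\pi]^{\,b}_j\, \delta_{a,b}=\delta_{i,j}$ (Einstein summation). The final linearity statement is immediate from the fact that \eqref{eq: kth action} is a linear map, so a scalar combination $\sum_j \alpha_j A_j$ transforms in the same way as each $A_j$.

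For claim 1 (tensor product), I would write out $[A'\otimes B']_{i_1,\ldots,i_{k+p}}=[A']_{i_1,\ldots,i_k}[B']_{i_{k+1},\ldots,i_{k+p}}$, substitute the transformation law for $A'$ and $B'$, and reassemble the $k+p$ factors of $P_\pi$ into the transformation law for a tensor of order $k+p$. Claim 2 (elementwise product along dimensions $(a_1,\ldots,a_p)$) works the same way: after substituting the rules for $A'$ and $B'$, the $p$ transformation factors attached to $B$'s indices are attached to the axes $a_1,\ldots,a_p$ of the output, which is exactly what a $k$-th order P-tensor transformation prescribes (since those same factors also already appear on those axes via $A$).

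For claim 3 (projection), I would observe that summing $[A']_{i_1,\ldots,i_k}$ over $i_{a_1},\ldots,i_{a_p}$ contracts each corresponding $[P_\pi]^{\,j'}_{j}$ against $\sum_j 1=1$, removing exactly those $p$ factors and leaving the transformation law of a $(k-p)$-th order P-tensor. Claim 4 (generalized contraction with delta tensors) is the key step and the only one with real content: I would use orthogonality of $P_\pi$ in the form $[P_\pi]^{\,a}_i [P_\pi]^{\,b}_j\, \delta^{a,b}=\delta^{i,j}$ generalized to $p$-fold diagonals, namely
\[
[P_\pi]^{\,a_1}_{i_1}\cdots [P_\pi]^{\,a_p}_{i_p}\, \delta^{a_1,\ldots,a_p}=\delta^{i_1,\ldots,i_p},
\]
which follows because $P_\pi$ permutes basis vectors and hence preserves the ``all indices equal'' diagonal. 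Applying this identity to each of the groups $\{a^1_1,\ldots,a^1_{p_1}\},\ldots,\{a^q_1,\ldots,a^q_{p_q}\}$ simultaneously eliminates precisely the $\sum_j p_j$ transformation factors whose indices get contracted, leaving the transformation law of a $(k-\sum_j p_j)$-th order P-tensor.

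The main obstacle is purely notational: keeping track of which $P_\pi$ factor is paired with which index, especially in claim 4 where many overlapping groups of indices may be contracted simultaneously. Once one writes out the generalized orthogonality identity above and confirms it by evaluating both sides on basis tensors, every case reduces to a bookkeeping exercise in Einstein notation.
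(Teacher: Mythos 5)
Your proposal is correct and follows essentially the same route as the paper: direct index-by-index verification of the transformation law \eqref{eq: kth action}, with linearity disposing of the last claim. Two small points of comparison. In Case 2 you observe that the factors $[P_\pi]_{i_{a_j}}{}^{j'}$ contributed by $B$ duplicate factors already present from $A$, but the reason the duplication is harmless still needs to be said: since $P_\pi$ has only $0/1$ entries, $[P_\pi]_{a,b}^2=[P_\pi]_{a,b}$, so the repeated factor can be dropped --- this is exactly the one-line observation the paper makes, and neither of the two identities you list up front (row/column sums equal to $1$, orthogonality) yields it directly. In Case 4 you are more explicit than the paper, which merely asserts that it ``follows directly from 3''; your identity $[P_\pi]_{i_1}{}^{a_1}\cdots[P_\pi]_{i_p}{}^{a_p}\,\delta^{a_1,\ldots,a_p}=\delta^{i_1,\ldots,i_p}$ (permutation matrices preserve the diagonal, i.e.\ $\delta$ is itself an invariant $P$-tensor) is precisely the ingredient that makes the reduction of Case 4 to Cases 2 and 3 legitimate, so this part of your sketch actually fills a gap the paper leaves implicit.
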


The more challenging part of constructing the aggregation scheme for comp-nets is establishing how 
to relate \m{P}--tensors at different nodes. The following two propositions answer this question. 

\begin{proposition}\label{prop: promotion}
Assume that node \m{\neuron_a} is a descendant of node \m{\neuron_b} in a comp-net \m{\Ncal}, 
\m{\prt_a=(e_{p_1},\ldots,e_{p_{m}})} and \m{\prt_b=(e_{q_1},\ldots,e_{q_{m'}})} are the corresponding 
ordered receptive fields (note that this implies that, as sets, \m{\prt_a\<\subseteq\prt_b}), 
and \m{\chi^{a\to b}\tin\RR^{m\times m'}} is an indicator matrix defined 
\[\chi^{a\to b}_{i,j}=
\begin{cases}
~1&\Tif~~ q_j=p_i\\
~0&\Totherwise. 
\end{cases}\]
Assume that \m{F} is a \m{k}'th order \m{P}--tensor with respect to permutations of 
\m{(e_{p_1},\ldots,e_{p_{m}})}. 
Then, dropping the \m{{}^{a\to b}} superscript for clarity,  
\begin{equation}\label{eq: promote}
\widetilde F_{\sseq{i}{k}}=\chi_{i_1}^{\phantom{a}j_1}\,\chi_{i_2}^{\phantom{a}j_2}\,\ldots\,
\chi_{i_k}^{\phantom{a}j_k}\,F_{\sseq{j}{k}}
\end{equation}
is a \m{k}'th order \m{P}--tensor with respect to permutations of \m{(e_{q_1},\ldots,e_{q_{m'}})}. 
\end{proposition}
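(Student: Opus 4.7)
My plan is to verify directly that \m{\widetilde F} satisfies the \m{P}-tensor transformation law \eqref{eq: kth action} on \m{\Pcal_b}, by tracking how each factor in \eqref{eq: promote} transforms under a global permutation of the atoms and cancelling the pieces that come from the auxiliary ordering of \m{\Pcal_a}.

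The preparatory step, which is the technical heart of the argument, is to work out how the indicator matrix transforms. Let a global atom permutation induce alignment permutations \m{\pi_a\in\mathbb{S}_m} on the ordering of \m{\Pcal_a} and \m{\pi_b\in\mathbb{S}_{m'}} on the ordering of \m{\Pcal_b}. Unpacking the definition \m{\chi_{i,j}=\mathbf{1}[q_j=p_i]} with respect to the two renumbered orderings gives \m{\chi' = P_{\pi_a}\,\chi\,P_{\pi_b}^{\top}}, which couples the otherwise independent \m{\mathbb{S}_m} and \m{\mathbb{S}_{m'}} actions through the receptive field inclusion \m{\Pcal_a\subseteq\Pcal_b}. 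Meanwhile, the hypothesis that \m{F} is a \m{P}-tensor on \m{\Pcal_a} supplies the companion rule \m{F\mapsto P_{\pi_a}^{\otimes k}F} for free.

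The main step is then a one-line substitution into \eqref{eq: promote}, viewed in matrix form as a \m{k}-fold Kronecker power of \m{\chi} (or \m{\chi^{\top}}, depending on the indexing convention) applied to \m{F}. Distributing using \m{(ABC)^{\otimes k}=A^{\otimes k}B^{\otimes k}C^{\otimes k}} and then using the orthogonality \m{P_{\pi_a}^{\top}P_{\pi_a}=I} collapses the two \m{P_{\pi_a}} factors (one coming from \m{\chi'}, one from \m{F'}) against each other, leaving \m{\widetilde F' = P_{\pi_b}^{\otimes k}\,\widetilde F}. This is exactly \eqref{eq: kth action} for a \m{k}-th order \m{P}-tensor on \m{\Pcal_b}, which is what we set out to show.

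I expect the main obstacle to be bookkeeping rather than any real difficulty: pinning down the direction of each permutation action (\m{\pi_a} vs.\ \m{\pi_a^{-1}}, which side of \m{\chi}) so that the identity \m{\chi' = P_{\pi_a}\chi P_{\pi_b}^{\top}} comes out with the correct conventions. Once that combinatorial fact is in place, the rest is immediate and expresses the natural intuition that the intermediate \m{\Pcal_a}-ordering is only scaffolding: it enters twice, once through \m{F} and once through \m{\chi}, and these two appearances cancel to leave a genuine \m{P}-tensor on \m{\Pcal_b}.
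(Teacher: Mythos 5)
Your proof is correct and takes essentially the same route as the paper's: work out how the indicator matrix $\chi$ transforms under the induced permutation and substitute into the promotion formula to read off the $P$-tensor law on $\mathcal{P}_b$. The only difference is that you also track the induced permutation $\pi_a$ of the child's ordering and cancel it via $P_{\pi_a}^{\top}P_{\pi_a}=I$, whereas the paper's proof holds the ordering of $\mathcal{P}_a$ fixed and only permutes $\mathcal{P}_b$ (so that $\chi'=P_\pi\chi$ and $F$ is unchanged); your version is the slightly more complete account of the general case and your identity $\chi'=P_{\pi_a}\chi P_{\pi_b}^{\top}$ is the correct bookkeeping.
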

 
Equation \ref{eq: promote} tells us that when node \m{\neuron_b} aggregates \m{P}--tensors from its 
children, it first has to ``promote'' them to being \m{P}--tensors with respect to the contents of 
its own receptive field by contracting along each of their dimensions with the appropriate 
\m{\chi^{a\to b}} matrix. This is a critical element in comp-nets to guarantee covariance. 

\begin{proposition}\label{prop: stacking}
Let \m{\neuron_{c_1},\ldots,\neuron_{c_s}} be the children of \m{\neuron_t} in a 
message passing type comp-net with corresponding \m{k}'th order 
tensor activations \m{\smash{F_{c_1},\ldots,F_{c_s}}}. 
Let 
\[[\widetilde F_{c_u}]_{\sseq{i}{k}}=
[\chi^{c_u\to t}]_{i_1}^{\phantom{a}j_1}\,[\chi^{c_u\to t}]_{i_2}^{\phantom{a}j_2}\,\ldots\,
[\chi^{c_u\to t}]_{i_k}^{\phantom{a}j_k}\,[F_{c_u}]_{\sseq{j}{k}}\]
be the promotions of these activations to \m{P}--tensors of \m{\neuron_t}. 
Assume that \m{\prt_t=(e_{p_1},\ldots,e_{p_m})}. 
Now let \m{\smash{\wbar{F}}} be a \m{k\<+1}'th order object in which the \m{j}'th slice is 
\m{\smash{\widetilde F_{p_j}}} 
if \m{\neuron_{p_j}} is one of the children of  \m{\neuron_t}, i.e., 
\[\wbar{F}_{\sseq{i}{k},j}=[\widetilde F_{p_j}]_{\sseq{i}{k}},\]
and zero otherwise. Then \m{\smash{\wbar{F}}} is a \m{k\<+1}'th order \m{P}--tensor of \m{\neuron_t}.  
\end{proposition}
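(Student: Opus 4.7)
The plan is to verify the \m{(k\<+1)}-th order \m{P}-tensor transformation rule \rf{eq: kth action} directly for \m{\wbar F}. Fix a permutation \m{\pi\tin\Sbb_m} acting on the ordering of \m{\prt_t}, and write the permuted receptive field as \m{\prt'_t\<=(e'_{q_1},\ldots,e'_{q_m})} using the alignment convention \m{e'_{q_{\pi(a)}}\<=e_{p_a}} from Definition \ref{def: steerable compnet}. The atom at new position \m{j} is then the atom that previously occupied position \m{\pi^{-1}(j)}, namely \m{e_{p_{\pi^{-1}(j)}}}.

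First, I would observe that by the permutation invariance of the composition scheme (Definition \ref{def: invariant parts}), the set of children of \m{\neuron_t} is preserved: \m{\neuron'_{q_j}} is a child of \m{\neuron'_t} in \m{\Ncal'} if and only if \m{\neuron_{p_{\pi^{-1}(j)}}} is a child of \m{\neuron_t} in \m{\Ncal}. Consequently, both the nonzero and zero slices are consistently reindexed, yielding the uniform identification
\[[\wbar F']_{\sseq{i}{k},j}=[\widetilde F'_{p_{\pi^{-1}(j)}}]_{\sseq{i}{k}},\]
with the right hand side interpreted as zero whenever there is no child at that position.

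Next, I would invoke Proposition \ref{prop: promotion}: since each \m{\widetilde F_{c_u}} is a \m{k}-th order \m{P}-tensor with respect to \m{\prt_t}, its primed version satisfies
\[[\widetilde F'_{c_u}]_{\sseq{i}{k}}=[P_\pi]^{\phantom{A}j_1}_{i_1}\,\ldots\,[P_\pi]^{\phantom{A}j_k}_{i_k}\,[\widetilde F_{c_u}]_{\sseq{j}{k}}.\]
Substituting this into the slice identification above and rewriting \m{[\wbar F]_{\sseq{j}{k},\pi^{-1}(j)}} as \m{[P_\pi]_{j}^{\phantom{A}j'}\,[\wbar F]_{\sseq{j}{k},j'}} (which is valid because \m{[P_\pi]_{j,j'}\<=1} iff \m{j'\<=\pi^{-1}(j)}) yields
\[[\wbar F']_{\sseq{i}{k},j}=[P_\pi]^{\phantom{A}j_1}_{i_1}\,\ldots\,[P_\pi]^{\phantom{A}j_k}_{i_k}\,[P_\pi]_{j}^{\phantom{A}j'}\,[\wbar F]_{\sseq{j}{k},j'},\]
which is precisely \rf{eq: kth action} at order \m{k\<+1}.

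The main technical obstacle I anticipate is keeping the two permutation effects disentangled: the relabeling of atoms shifts which child contributes to which slot of \m{\wbar F}, while the reindexing of tensor entries within each slot is governed by Proposition \ref{prop: promotion}. The zero-slice case must also be checked to behave compatibly, but this follows immediately from the invariance of the composition scheme, which guarantees that the ``child vs.\ no child'' pattern among the atoms of \m{\prt_t} is preserved by \m{\pi}. Once these bookkeeping points are aligned with the convention \m{e'_{q_{\pi(a)}}\<=e_{p_a}}, the argument reduces to substitution and requires no new tensor identities beyond those supplied by Propositions \ref{prop: contractions} and \ref{prop: promotion}.
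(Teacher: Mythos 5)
Your proof is correct and follows essentially the same route as the paper's: each slice transforms as a \m{k}'th order \m{P}--tensor by Proposition \ref{prop: promotion}, while \m{\pi} simultaneously permutes the slices among themselves via \m{j\mapsto \pi^{-1}(j)}, and combining the two factors yields the order-\m{k\<+1} transformation rule. Your write-up is in fact slightly more careful than the paper's, since you explicitly invoke the invariance of the composition scheme to justify that the zero (non-child) slices are reindexed consistently with the nonzero ones.
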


Finally, as already mentioned, the restriction of the adjacency matrix to \m{\prt_i} is a second order 
\m{P}--tensor, which gives an easy way of explicitly adding topological information to the activation. 

\begin{proposition}\label{prop: adjacency}
If \m{F_i} is a \m{k}'th order \m{P}--tensor at node \m{\neuron_i}, and \m{A\tdown_{\prt_i}} is the 
restriction of the adjacency matrix to \m{\prt_i} as defined in Section \ref{sec: second order}, 
then \m{F\<\otimes A\tdown_{\prt_i}} is a \m{k\<+2}'th order \m{P}--tensor.  
\end{proposition}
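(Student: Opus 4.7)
The plan is to reduce the claim to two results already established earlier in the paper. First, the restricted adjacency matrix $A\tdown_{\prt_i}$ is itself a second order $P$-tensor with respect to permutations of $\prt_i$; this is essentially flagged in Section \ref{sec: second order}, but I would make it explicit as a small lemma. Second, part 1 of Proposition \ref{prop: contractions} says that the tensor product of a $k$-th order $P$-tensor with a $p$-th order $P$-tensor is a $(k+p)$-th order $P$-tensor. Applying the latter with $p=2$ and second factor $A\tdown_{\prt_i}$ yields the conclusion immediately.

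The only substantive step is therefore verifying the transformation law for $A\tdown_{\prt_i}$. Writing $\prt_i = (e_{p_1}, \ldots, e_{p_m})$, the restriction is defined entrywise by $[A\tdown_{\prt_i}]_{a,b} = A_{p_a,p_b}$. If the internal ordering of $\prt_i$ is permuted by some $\pi \in \Sbb_m$ to a new ordering $(e_{q_1}, \ldots, e_{q_m})$ satisfying $e_{q_{\pi(a)}} = e_{p_a}$ (equivalently $q_b = p_{\pi^{-1}(b)}$), as in Definition \ref{def: steerable compnet}, then the restriction in the new ordering has entries
\[
[A'\tdown_{\prt_i}]_{a,b} \;=\; A_{q_a,q_b} \;=\; A_{p_{\pi^{-1}(a)},\, p_{\pi^{-1}(b)}} \;=\; [A\tdown_{\prt_i}]_{\pi^{-1}(a),\,\pi^{-1}(b)}.
\]
Using the definition $[P_\pi]_{i,j} = 1$ if $\pi(j) = i$ and $0$ otherwise, a one-line calculation shows $A'\tdown_{\prt_i} = P_\pi\, A\tdown_{\prt_i}\, P_\pi^{\top}$, which is exactly the second order transformation rule of Definition \ref{def: 2nd order}. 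Hence $A\tdown_{\prt_i}$ is a second order $P$-tensor, and part 1 of Proposition \ref{prop: contractions} closes the argument.

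There is no genuine obstacle here: the statement is essentially a one-line corollary of pieces already in place. The only care required is the routine bookkeeping of $\pi$ versus $\pi^{-1}$ when translating between the reordering of the receptive field and the action on matrix indices; I would handle this by writing the two conventions side by side once, and then appealing to them. The whole proof can comfortably fit in a short paragraph.
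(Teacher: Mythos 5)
Your proposal is correct and follows exactly the paper's own argument: first verify that $A\tdown_{\prt_i}$ transforms as a second order $P$--tensor (your identity $[A'\tdown_{\prt_i}]_{a,b}=[A\tdown_{\prt_i}]_{\pi^{-1}(a),\pi^{-1}(b)}$ is the same statement as the paper's $[A\tdown_{\prt'_i}]_{\pi(a),\pi(b)}=[A\tdown_{\prt_i}]_{a,b}$), then invoke case 1 of Proposition \ref{prop: contractions}. Your version is merely more explicit about the $\pi$ versus $\pi^{-1}$ bookkeeping, which is a reasonable addition but not a different route.
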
 

\subsection{The general aggregation function and its special cases}

Combining all the above results, assuming that node \m{\neuron_t} has children 
\m{\neuron_{c_1},\ldots,\neuron_{c_s}}, we arrive at the following general 
algorithm for the aggregation rule \m{\smash{\Phi_t}}:
\bigskip 

\begin{center}
\framebox{ 
\parbox{.95\textwidth}{\vspace{3pt}
\setlength{\plitemsep}{2pt}
\begin{compactenum}[~~1.]
\item Collect all the \m{k}'th order activations \m{\smash{F_{c_1},\ldots,F_{c_s}}} of the children. 
\item Promote each activation to \m{\widetilde F_{c_1},\ldots,\widetilde F_{c_s}} (Proposition \ref{prop: promotion}). 
\item Stack \m{\smash{\widetilde F_{c_1},\ldots,\widetilde F_{c_s}}} together into a \m{k\<+1} 
order tensor \m{\smash{T}} (Proposition \ref{prop: stacking}).   
\item Optionally form the tensor product of \m{\smash{T}} with \m{A\tdown_{\prt_t}} to get 
a \m{k\<+3} order tensor \m{H} (otherwise just set \m{H=\smash{T}}) (Proposition \ref{prop: adjacency}). 
\item Contract \m{H} along some number of combinations of dimensions to get \m{s} separate 
lower order tensors \m{Q_{1},\ldots,Q_{s}} (Proposition \ref{prop: contractions}).
\item Mix \m{Q_{1},\ldots,Q_{s}} with a matrix \m{W\tin\RR^{s'\times s}} and apply a nonlinearity 
\m{\Upsilon}
to get the final activation of the neuron, which consists of the \m{s'} output tensors 
\[F^{(i)}=\Upsilon\sqbbigg{\;\sum_{j=1}^{s} W_{i,j}\,Q_j+b_i \mathbbm{1}_{}\,}
\qqquad i=1,2,\ldots s',\]
where the \m{b_i} scalars are bias terms, and \m{\mathbbm{1}} is the 
\m{\abs{\prt_t}\times\ldots\times \abs{\prt_t}} dimensional all ones tensor.
\end{compactenum}
\mbox{}\vspace{0pt}\mbox{}
}
}
\end{center}
\bigskip 

A few remarks are in order about this general scheme:
\begin{compactenum}[~~1.]
\item Since \m{\smash{\widetilde F_{c_1},\ldots,\widetilde F_{c_s}}} are stacked into a larger tensor 
and then possibly also multiplied by \m{A\tdown_{\prt_t}}, the general tendency would be for the 
tensor order to increase at every node, and the corresponding storage requirements to increase exponentially. 
The purpose of the contractions in Step 5 is to counteract this tendency, and pull the order of the tensors 
back to some small number, typically \m{1,2} or \m{3}. 
\item However, since contractions can be done in many different ways, the number of channels will increase. 
When the number of input channels is small, this is reasonable, since otherwise the number of learnable 
weights in the algorithm would be too small. However, if unchecked, this can also become problematic.  
Fortunately, mixing the channels by \m{W} on Step 6 gives an opportunity to stabilize the number of channels 
at some value \m{s'}. 
\item In the pseudocode above, for simplicity, the number of input channels is one and the number of 
output channels is \m{s'}. More realistically, the inputs would also have multiple channels (say, \m{s_0}) 
which would be propagated through the algorithm independently up to the mixing stage, making 
\m{W} an \m{s'\times s\times s_0} dimension tensor (not in the \m{P}--tensor sense!).  
\item The conventional part of the entire algorithm is Step 6, and the only learnable parameters 
are the entries of the \m{W} matrix (tensor) and the \m{b_i} bias terms. 
These parameters are shared by all nodes in the network and learned in the usual way, by stochastic 
gradient descent. 
\item Our scheme could be elaborated further while maintaining permutation covariance by, for example 
taking the tensor product of \m{T} with itself, or by introducing \m{A\tdown_{\prt_t}} 
in a different way. However, the way that \m{\smash{\widetilde F_{c_1},\ldots,\widetilde F_{c_s}}} 
and \m{A\tdown_{\prt_t}} are combined by tensor products is already much more general and expressive 
than conventional message passing networks. 
\item Our framework admits many design choices, including the choice of the order odf the activations, 
the choice of contractions, and \m{c'}. 
However, the overall structure of Steps 1--5 is fully dictated by the covariance 
constraint on the network. 
\item The final output of the network \m{\phi(G)\<=F_r} must be permutation invariant. That means that 
the root node \m{\neuron_r} must produce a tuple of zeroth order tensors (scalars) 
\m{\smash{(F_r^{(1)},\ldots, F_r^{(c)})}}. This is similar to how many other graph representation 
algorithms compute \m{\phi(G)} by summing the activations at level \m{L} or creating histogram features. 
\end{compactenum}
We consider a few special cases to explain how tensor aggregation relates to more conventional 
message passing rules.

\subsubsection{Zeroth order tensor aggregation}

Constraining both the input tensors \m{\smash{F_{c_1},\ldots,F_{c_s}}} and the outputs 
to be zeroth order tensors, i.e., scalars, and foregoing multiplication by \m{A\tdown_{\prt_t}} greatly simplifies 
the form of \m{\Phi}. In this case there is no need for promotions, and \m{T} is just the vector 
\m{(\smash{F^\ell_{c_1},\ldots,F^\ell_{c_s}})}. There is only one way to contract a vector 
into a scalar, and that is to sum its elements. Therefore, in this case, the entire aggregation 
algorithm reduces to the simple formula 
\[F_i=\Upsilon \brBig{\,w \sum_{u=1}^{c} F_{c_u}+b\,}.\]
For a neural network this is too simplistic. However, it's interesting to note that the 
Weisfeiler--Lehmann isomorphism test essentially builds on just this formula, with 
a specific choice of \m{\Upsilon} \citep{Read1977}. 
If we allow more channels in the inputs and the outputs, \m{W} becomes a matrix, and 
we recover the simplest form of neural message passing algorithms \citep{Duvenaud2015}.

\subsubsection{First order tensor aggregation}

In first order tensor aggregation, assuming that \m{\abs{\prt_i}\<=m}, 
\m{\smash{\widetilde F_{c_1},\ldots,\widetilde F_{c_s}}} are \m{m} dimensional column vectors, and 
\m{T} is an \m{m\times m} matrix consisting of \m{\smash{\widetilde F_{c_1},\ldots,\widetilde F_{c_s}}} 
stacked columnwise. 
There are two ways of contracting (in our generalized sense) a matrix into a vector: 
by summing over its rows, or summing over its columns. The second of these choices leads us back to 
summing over all contributions from the children, while the first is more interesting because it 
corresponds to summing \m{\smash{\widetilde F_{c_1},\ldots,\widetilde F_{c_s}}} as vectors individually. 
In summary, we get an aggregation function that transforms a single input channel to two output 
channels of the form 
\[
F_i^{(1)}=\Upsilon\sqbBig{\: w_{1,1} (T^\top\nts \V 1) + w_{1,2} (T\,\V {1}) + b_1\ts \V 1\:},
\qquad
F_i^{(2)}=\Upsilon\sqbBig{\: w_{2,1} (T^\top\nts \V 1) + w_{2,2} (T\,\V {1}) + b_2\ts \V 1\:},
\]
where \m{\V 1} denotes the \m{m} dimensional all ones vector. 
Thus, in this layer \m{W\tin\RR^{2\times 2}}. 
Unless constrained by \m{c'}, in each subsequent layer the number of channels doubles 
further and these channels can all mix with each other, 
so \m{W^{(2)}\nts \tin \RR^{4\times 4}}, \m{W^{(3)}\nts \tin \RR^{8\times 8}}, and so on.

\subsubsection{Second order tensor aggregation without the adjacency matrix} 

In second order tensor aggregation, \m{T} is a third order \m{P}--tensor, which can be contracted 
back to second order in three different ways, by projecting it along each of its dimensions. 
Therefore the outputs will be the three matrices 
\[F^{(i)}=\Upsilon\brbig{
w_{i,1} T\tdown_{1}+w_{i,2} T\tdown_{2}+w_{i,3} T\tdown_{3}+b_i\V 1_{m\times m}
}\qqquad\qqquad i\tin\cbrN{1,2,3},\]
and the weight matrix is \m{W\tin\RR^{3\times 3}}. 

\subsubsection{Second order tensor aggregation with the adjacency matrix}

The first nontrivial tensor contraction case occurs when 
\m{\smash{\widetilde F_{c_1},\ldots,\widetilde F_{c_s}}} are second order tensors, and 
we multiply with \m{A\tdown_{\prt_t}}, since in that case \m{T} is 5th order, and can be contracted 
down to second order in a total of 50 different ways:
\begin{compactenum}[~~1.]
\item The ``1+1+1'' case contracts \m{T} in the form \m{T_{i_1,i_2,i_3,i_4,i_5}
\delta^{i_{a_1}}\delta^{i_{a_2}}\delta^{i_{a_3}}}, i.e., it projects \m{T} down along \m{3} of its \m{5} 
dimensions. This alone can be done in \m{\smash{\binom{5}{3}\<=10}} different ways\footnote{For simplicity, 
we ignore the fact that symmetries, such as the symmetry of \m{A\tdown_{\prt_t}}, might reduce the 
number of distinct projections somewhat.}
\item The ``1+2'' case contracts \m{T} in the form \m{T_{i_1,i_2,i_3,i_4,i_5}
\delta^{i_{a_1}}\delta^{i_{a_2},i_{a_3}}}, i.e., it projects \m{T} along one dimension,  
and contracts it along two others. This can be done in \m{\smash{3\binom{5}{3}\<=30}} ways. 
\item The ``3'' case is a single \m{3}-fold contraction 
\m{T_{i_1,i_2,i_3,i_4,i_5} \delta^{i_{a_1},i_{a_2},i_{a_3}}}, which again can be done 
in \m{\smash{\binom{5}{3}\<=10}} different ways. 
\end{compactenum}
Clearly, maintaining 50 channels in a message passing architecture is excessive, so 
in practice it is reasonable to set \m{c'\approx 10}, making \m{W\tin\RR^{10\times 50}}. 
\begin{figure}[t]
	\begin{center}
\begin{tikzpicture}[scale = 0.5,decoration={brace,mirror,amplitude=7},>={Stealth[black]}]

  \node[circle,draw=black, fill=red, inner sep=0pt,minimum size=7pt,label={below: \tiny $v$}] (root1) at (0,0) {};
  
  \node[circle,draw=black, fill=red, inner sep=0pt,minimum size=7pt,label={below: \tiny $w_1$}] (vertex1) at (5,5) {};
  	\pic [fill=gray, opacity=1] at (7.8,6) {annotated cuboid={width=100, height=100, depth=1, scale=.01, units=m}};
  	\node[scale=0.7] (TensorDim1) at (6.8,3.7) {\tiny $|\Omega_{\ell-1}(w_1)|$};
  
  \node[circle,draw=black, fill=red, inner sep=0pt,minimum size=7pt,label={below: \tiny $w_2$}] (vertex2) at (5,-5) {};
    \pic [fill=gray, opacity=1] at (7.6,-5) {annotated cuboid={width=100, height=100, depth=1, scale=.01, units=m}};
    \node[scale=0.7] (TensorDim2) at (6.6,-7.3) {\tiny $|\Omega_{\ell-1}(w_2)|$};
  
  \node[circle,draw=black, fill=red, inner sep=0pt,minimum size=7pt,label={below: \tiny $w_3$}] (vertex3) at (-7.07,0) {};
    	\pic [fill=gray, opacity=1] at (-8.07,1) {annotated cuboid={width=100, height=100, depth=1, scale=.01, units=m}};
    	\node[scale=0.7] (TensorDim3) at (-9.07,-1.3) {\tiny $|\Omega_{\ell-1}(w_3)|$};
  
  \draw[thick,->, blue] (vertex1) -- (root1); \draw[thick,->, blue] (vertex2) -- (root1);
  \draw[thick,->, blue] (vertex3) -- (root1);
  
   \pic [fill=gray, opacity=1] at (-3.5,-8) {annotated cuboid={width=100, height=100, depth=1, scale=.01, units=m}};
   \node[scale=0.7] (opTensorDim3) at (-5,-11.1) {\tiny $|\Omega_{\ell}(v)|$};
   \pic [fill=gray, opacity=1] at (-3.75,-8.25) {annotated cuboid={width=100, height=100, depth=1, scale=.01, units=m}};
   \pic [fill=gray, opacity=1] at (-4.0,-8.5) {annotated cuboid={width=100, height=100, depth=1, scale=.01, units=m}};
   \node[scale=1.0] (kp) at (-2.5,-8.9) {\Large $\otimes$};
   
   \pic [fill=gray, opacity=1] at (0.75,-8.25) {annotated cuboid={width=100, height=100, depth=1, scale=.01, units=m}};
   
   \node[scale=0.7] (redadjmatrix) at (-0.2,-9.25) {$A\downarrow_{\Omega_{\ell}(v)}$};
      
   \draw [decorate] ([yshift=-6mm]opTensorDim3.west) --node[below=3mm]{\small Tensor to be reduced} ([yshift=-24mm]redadjmatrix.east);

   \draw[thin,dashed, ->, gray] (TensorDim3) -- (-4.6,-7.8);
   \draw[thin,dashed, ->, gray] (TensorDim1) -- (-4.3,-7.8);
   \draw[thin,dashed, ->, gray] (5.5,-6.1) --(-4.0,-7.8);   

\end{tikzpicture}
\end{center}
\caption{\label{fig: Tensor Neighbors}
The activations of each neighbor are stacked into a tensor \m{T} which is tensor multiplied by the 
restriction of the adjacency matrix, and then reduced in different ways.
}
\end{figure}
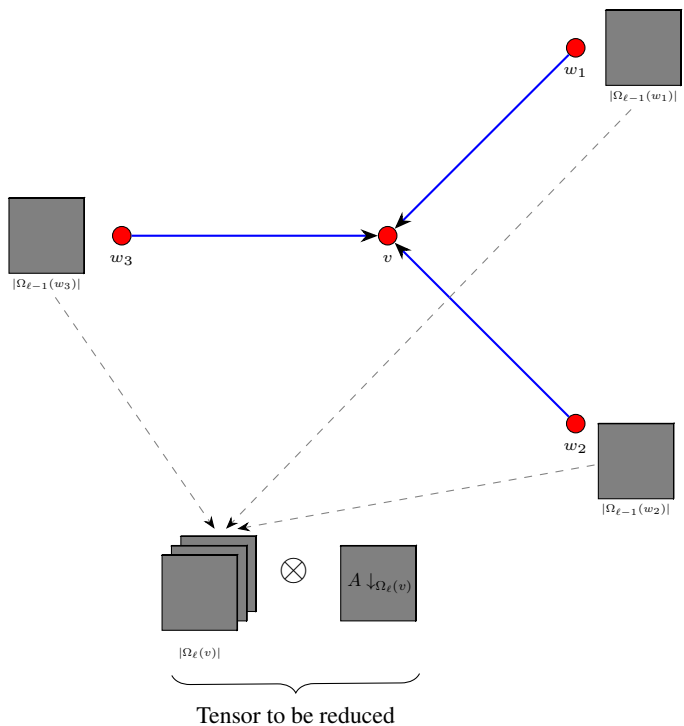
\ignore{\input{fig-2D-tensor}}
\ignore{
The general formula for the number of distinct contractions of a rank \m{k} \m{P}--tensor 
to a rank \m{k-p} \m{P}--tensor is given by the following formula.

\begin{proposition}
Let \m{T} be a rank \m{k} tensor. Then the number of distinct generalized contractions 
of \m{T} down to  rank \m{k-p} is ... 
\end{proposition}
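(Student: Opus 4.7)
The plan is to prove that the count equals \m{\binom{k}{p}\,B(p)}, where \m{B(p)} denotes the \m{p}'th Bell number (the number of set-partitions of a \m{p}-element set). The starting point is to unpack the definition of a generalized contraction from Section 5: by the displayed formula
\[C_{\ldots}=A_{i_1,\ldots,i_k}\,\delta^{\sseq{a^1}{p_1}}\,\delta^{\sseq{a^2}{p_2}}\,\cdots\,\delta^{\sseq{a^q}{p_q}},\]
together with the observation that projections are the singleton-block (\m{p_j\<=1}) case, every generalized contraction is specified by the data of (i) a subset \m{S\subseteq\cbrN{1,\ldots,k}} of indices that get eliminated, plus (ii) a partition of \m{S} into non-empty blocks \m{G_1,\ldots,G_q}, with one \m{\delta} factor per block.

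Next I would do the rank bookkeeping. A block \m{G_j} of size \m{s_j\<\geq 1} removes exactly \m{s_j} indices from the output tensor --- either by identifying them to a single common dummy summation variable (when \m{s_j\<\geq 2}) or by direct summation (when \m{s_j\<=1}). The total rank reduction is \m{\sum_j s_j\<=\abs{S}}, so achieving output rank \m{k-p} forces \m{\abs{S}\<=p}.

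Counting is then immediate and factorizes: there are \m{\binom{k}{p}} subsets \m{S} of size \m{p}, and, for each such \m{S}, there are exactly \m{B(p)} partitions of \m{S} into non-empty blocks, by the very definition of the Bell number. Multiplying gives \m{\binom{k}{p}\,B(p)}. As a sanity check, the ``\m{k\<=5}, \m{p\<=3}'' case from Section \ref{sec: second order} returns \m{\binom{5}{3} B(3)\<=10\cdot 5\<=50}, which matches the paper's explicit enumeration \m{10+30+10\<=50} broken down by partition shape \m{(1\<+1\<+1)}, \m{(1\<+2)}, \m{(3)}.

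The main obstacle is \emph{distinctness}: one must verify that two different pairs \m{(S,\Pi)\neq (S',\Pi')} yield two different linear operators \m{\Tcal^k\to\Tcal^{k-p}}, so no over-counting occurs. The cleanest route is to exhibit, for each such pair, a test input on which the two operators disagree; a natural witness is a rank-one tensor \m{T\<=\V e_{j_1}\otimes\cdots\otimes \V e_{j_k}} whose standard-basis indices \m{(\sseq{j}{k})} are chosen to violate exactly one of the equality constraints imposed by the competing partitions. This same analysis clarifies the caveat in the footnote: if \m{T} has nontrivial index symmetries (as does \m{A\tdown_{\prt_t}}), several \m{(S,\Pi)} may coincide as operators when restricted to the symmetric subspace, and \m{\binom{k}{p}\,B(p)} becomes an upper bound rather than an equality.
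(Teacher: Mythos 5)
Your proposal is correct and follows essentially the same route as the paper's own (unfinished) argument: choose the $p$ indices to be eliminated in $\binom{k}{p}$ ways, then partition them into non-empty blocks, one per $\delta$ factor, giving the Bell number $B(p)$ and hence $\binom{k}{p}\,B(p)$ in total --- consistent with the paper's explicit enumeration $\binom{5}{3}B(3)=10\cdot 5=50$ for $k=5$, $p=3$. The distinctness check via rank-one test tensors and the caveat about symmetric inputs are additions the paper does not spell out, but they only strengthen the claim.
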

\begin{proof}
Each contraction must involve exactly \m{p} vertices, which we shall call \m{\sseq{a}{p}}. 
The set \m{\cbrN{\sseq{a}{p}}} can be chosen in \m{\binom{k}{p}} distinct ways. 
This set must then be partitioned into a union of subsets corresponding to variables 
involved in the \emph{same} contraction (featured as indices of the same \m{\delta}). 
Since there are

\end{proof}
}

\section{Experiments}

\ignore{
To test the effectiveness of our proposed framework for constructing
covariant compositional neural networks (CCNs),
we implemented and tested a second order CCN, as described
in section 5, 
on the Harvard Clean Energy Project dataset \citep{HCEP}, 
and standard graph kernel datasets: MUTAG \citep{MUTAG}, PTC \citep{PTC}, NCI1, and
NCI109 \citep{NCI}.
}

We compared the second order variant of our 
CCN framework (Section \ref{sec: second order}) 
to several standard graph learning algorithms on three types of datasets that 
involve learning the properties of molecules from their structure: 
\begin{enumerate}[~1.]
\item 
\textbf{The Harvard Clean Energy Project} \citep{HCEP}, consisting of 2.3 million 
organic compounds that are candidates for use in solar cells. 
The regression target in this case is Power Conversion Efficiency (PCE). 
Due to time constraints, instead of using the entire dataset, the experiments were ran on 
a random subset of 50,000 molecules. 
\item 
\textbf{QM9}, which is a dataset of all ~133k organic molecules  
with up to nine atoms (C,H,O,N and F) 
out of the GDB-17 universe of molecules. Each molecule has
13 target properties to predict. 
The dataset does contain spatial information relating to the atomic configurations, 
but we only used the chemical graph and atom node labels. 
For our experiments we normalized each target variable to have mean 0
and standard deviation 1.
\item 
\textbf{Graph kernels datasets}, specifically 
\begin{inparaenum}
\item 
MUTAG, which is a dataset of 188 mutagenic aromatic and heteroaromatic compounds \citep{MUTAG}; 
\item 
PTC, which consists of 344
chemical compounds that have been tested for positive or negative toxicity in lab rats \citep{PTC};  
\item 
NCI1 and NCI109, which have 4110 and 4127 compounds respectively, 
each screened for activity against small cell lung cancer and ovarian cancer lines \citep{NCI}. 
\end{inparaenum}
\end{enumerate}

In the case of HCEP, we compared CCN to 
lasso, ridge regression, random forests, gradient
boosted trees, optimal assignment Wesifeiler--Lehman graph kernel \citep{Kriege16} (WL), 
neural graph fingerprints \citep{Duvenaud2015},
and the ``patchy-SAN'' convolutional type algorithm from \citep{Niepert2016} (referred to as PSCN).
For the first four of these baseline methods, we created simple feature vectors from each
molecule: the number of bonds of each type (i.e. number of H--H bonds, number of C--O bonds,
etc) and the number of atoms of each type. Molecular
graph fingerprints uses atom labels of each vertex as base features.
For ridge regression and lasso, we cross validated over \m{\lambda}. For random forests and gradient boosted
trees, we used 400 trees, and cross validated over max depth, minimum samples for a leaf, minimum
samples to split a node, and learning rate (for GBT).
For neural graph fingerprints, we used 2 layers and a hidden layer size of 10. In PSCN, we used a
patch size of 10 with two convolutional layers and a dense layer on top as described in their
paper.

For the graph kernels datasets, we compare against graph kernel results as reported in 
\citep{KondorPan2016} (which computed kernel matrices using the Weisfeiler--Lehman, Weisfeiler--edge,
shortest paths, graphlets and multiscale Laplacian graph kernels and used a C-SVM on top), Neural
graph fingerprints (with 2 levels and a hidden size of 10) and PSCN. 
For QM9, we compared against the Weisfeiler--Lehman graph kernel (with C-SVM on top),
neural graph fingerprints, and PSCN. The settings for NGF and PSCN
are as described for HCEP. 

For our own method, second order CCN, 
we initialized the base features of each vertex with computed histogram alignment kernel
features \citep{Kriege16} of depth up to 10. 
Each vertex receives a base label $l_i= \text{concat}_{j=1}^{10} {H_j (i)}$ where
$H_j(i) \in \mathbb{R}^d$  (with $d$ being the total number of distinct discrete node labels) is the vector 
of relative frequencies of each label for the set of vertices at distance equal to $j$ from vertex $i$.
We used two levels and doubled the intermediate
channel size at each level. 
For computational efficiency,  we only used 10 contractions as described in section 5.1.4 instead
of the full 50 contractions.  

In each experiment we used 80\% of the dataset for training, 10\% for validation, and
evaluated on the remaining 10\% test set. For the kernel datasets we performed
the experiments on 10 separate training/validation/test stratified splits and averaged the resulting
classification accuracies.
We always used stochastic gradient descent with momentum
0.9. Our initial learning rate was set to 0.001 after experimenting on a held out set. The learning
rate decayed linearly after each step towards a minimum of \m{10^{-6}}.

\subsection{Discussion}
On the subsampled HCEP dataset, CCN outperforms all other methods 
by a very large margin.
For the graph kernels datasets, SVM with the Weisfeiler--Lehman kernels achieve the highest
accuracy on NCI1 and NCI109, while CCN wins on MUTAG and PTC. 
Perhaps this poor performance is to be expected, since the datasets are small and
neural network approaches usually require tens of thousands of training examples at minimum
to be effective. Indeed, neural graph fingerprints and PSCN also perform poorly compared to
the Weisfeiler--Lehman kernels.

In the QM9 experiments, CCN beats the three other algorithms in both mean absolute error and 
root mean squared error. 
It should be noted that \citep{Gilmer2017} obtained stronger results on QM9, but
we cannot properly compare our results with theirs because our experiments
only use the adjacency matrices and atom labels of each node, while theirs
includes comprehensive chemical features that better inform the target quantum
properties. 

\ignore{
\begin{table}[t]
\caption{\label{tbl: hcep-old} HCEP regression results}
\begin{tabular}{||l | c | c | c | c ||}
	\hline
	& Train MAE & Train RMSE & Test MAE & Test RMSE \\
	\hline\hline
	Lasso & 0.863 & 1.190 & 0.867 & 1.437 \\
	\hline
	Ridge regression & 0.849 & 1.164 & 0.854 & 1.376 \\
	\hline
	Random forest & 0.999 & 1.331 & 1.004 & 1.799 \\
	\hline
	Gradient boosted trees & 0.676 & 0.939 & 0.704 & 1.005 \\
	\hline
	Weisfeiler--Lehman graph kernel & 0.805 & 1.111 & 0.805 & 1.096 \\
	\hline
	Neural graph fingerprints & 0.848 & 1.187 & 0.851 & 1.177 \\
	\hline
	PSCN ($k\< = 10$) & 0.704 & 0.972 & 0.718 & 0.973 \\ 
	\hline
	CCN 2D & {\color{red} 0.320} & {\color{red} 0.422} & {\color{red} 0.340} & {\color{red} 0.449} \\
	\hline
\end{tabular}
\end{table}
}

\begin{table}[t]
\caption{\label{tbl: hcep} HCEP regression results}
\begin{tabular}{||l | c | c ||}
	\hline
	& Test MAE & Test RMSE \\
	\hline\hline
	Lasso & 0.867 & 1.437 \\
	\hline
	Ridge regression & 0.854 & 1.376 \\
	\hline
	Random forest & 1.004 & 1.799 \\
	\hline
	Gradient boosted trees & 0.704 & 1.005 \\
	\hline
	WL graph kernel & 0.805 & 1.096 \\
	\hline
	Neural graph fingerprints & 0.851 & 1.177 \\
	\hline
	PSCN $(k \<= 10)$ & 0.718 & 0.973 \\ 
	\hline
	Second order CCN (our method) & {\color{red} 0.340} & {\color{red} 0.449} \\
	\hline
\end{tabular}
\end{table}

\begin{table}[t]
\caption{\label{tbl: kernels} Kernel Datasets Classification results (accuracy +/- standard deviation)}
\begin{tabular}{||l | c | c | c | c ||}
	\hline
	& MUTAG & PTC & NCI1 & NCI109 \\
	\hline\hline
	WL & 84.50 $\pm$ 2.16 & 59.97 $\pm$ 1.60 & {\color{red} 84.76 $\pm$ 0.32} & {\color{red} 85.12 $\pm$ 0.29} \\
	\hline
	WL-edge & 82.94 $\pm$ 2.33 & 60.18 $\pm$ 2.19 & {\color{red} 84.65 $\pm$ 0.25} & {\color{red} 85.32 $\pm$ 0.34} \\
	\hline
	SP & 85.50 $\pm$ 2.50 & 59.53 $\pm$ 1.71 & 73.61 $\pm$ 0.36 & 73.23 $\pm$ 0.26 \\
	\hline
	Graphlet & 82.44 $\pm$ 1.29 & 55.88 $\pm$ 0.31 & 62.40 $\pm$ 0.27 & 62.35 $\pm$ 0.28 \\
	\hline
	p-RW & 80.33 $\pm$ 1.35 & 59.85 $\pm$ 0.95 & TIMED OUT & TIMED OUT \\
	\hline
	MLG & 87.94 $\pm$ 1.61 & 63.26 $\pm$ 1.48 & 81.75 $\pm$ 0.24 & 81.31 $\pm$ 0.22 \\
	\hline
	PSCN $k = 10$ (Niepert et al.) & 88.95 $\pm$ 4.37 & 62.29 $\pm$ 5.68 & 76.34 $\pm$ 1.68 & N/A \\
	\hline
	Neural graph fingerprints & 89.00 $\pm$ 7.00 & 57.85 $\pm$ 3.36 & 62.21 $\pm$ 4.72 & 56.11 $\pm$ 4.31 \\
	\hline
	Second order CCN (our method) & {\color{red} 91.64 $\pm$ 7.24} & {\color{red} 70.62 $\pm$ 7.04} & 76.27 $\pm$ 4.13 & 75.54 $\pm$ 3.36 \\
	\hline
\end{tabular}
\end{table}

\begin{table}
\caption{\label{tbl: qm9mae} QM9 regression results (MAE)}
\begin{center}
\begin{tabular}{|| l | c | c | c | c ||}
	\hline
	                    & WLGK 		& NGF 	& PSCN ($k\<= 10$)  &  Second order CCN   \\
	\hline\hline
	alpha               & 0.46		& 0.43	& 0.20  & {\color{red} 0.16}	    \\
	\hline
	Cv	                & 0.59  	& 0.47	& 0.27  & {\color{red} 0.23}	    \\
	\hline
	G 	                & 0.51		& 0.46	& 0.33  & {\color{red} 0.29}	    \\
	\hline
	gap 	            & 0.72		& 0.67	& 0.60  & {\color{red} 0.54}	    \\
	\hline
	H 	                & 0.52		& 0.47	& 0.34  & {\color{red} 0.30}	    \\
	\hline
	HOMO 	            & 0.64		& 0.58	& 0.51  & {\color{red} 0.39}      \\
	\hline
	LUMO 	            & 0.70		& 0.65	& 0.59  & {\color{red} 0.53}	    \\
	\hline
	mu	 				& 0.69		& 0.63	& 0.54  & {\color{red} 0.48}		\\
	\hline
	omega1 				& 0.72		& 0.63	& 0.57  & {\color{red} 0.45}		\\
	\hline
	R2	 				& 0.55		& 0.49	& 0.22  & {\color{red} 0.19}	    \\
	\hline
	U	 	            & 0.52		& 0.47	& 0.34  & {\color{red} 0.29}      \\
	\hline
	U0	                & 0.52		& 0.47	& 0.34  & {\color{red} 0.29}	    \\
	\hline
	ZPVE 	            & 0.57		& 0.51	& 0.43  & {\color{red} 0.39}	    \\
	\hline
\end{tabular}
\end{center}
\end{table}

\ignore{
\begin{table}
\caption{\label{tbl: qm9mae-ccn1d} QM9 regression results (MAE)}
\begin{center}
\begin{tabular}{|| l | c | c | c | c | c ||}
	\hline
	                    & WLGK 		& NGF 	& PSCN ($k \<= 10$)  & CCN 1D 		&  Second order CCN   \\
	\hline\hline
	alpha               & 0.46		& 0.43	& 0.20  & {\color{red} 0.18}	& {\color{red} 0.16}	    \\
	\hline
	Cv	                & 0.59  	& 0.47	& 0.27  & {\color{red} 0.23}	& {\color{red} 0.23}	    \\
	\hline
	G 	                & 0.51		& 0.46	& 0.33  & {\color{red} 0.29}	& {\color{red} 0.29}	    \\
	\hline
	gap 	            & 0.72		& 0.67	& 0.60  & {\color{red} 0.56}	& {\color{red} 0.54}	    \\
	\hline
	H 	                & 0.52		& 0.47	& 0.34  & {\color{red} 0.29}	& {\color{red} 0.30}	    \\
	\hline
	HOMO 	            & 0.64		& 0.58	& 0.51  & {\color{red} 0.40}	& {\color{red} 0.39}      \\
	\hline
	LUMO 	            & 0.70		& 0.65	& 0.59  & {\color{red} 0.54}	& {\color{red} 0.53}	    \\
	\hline
	mu	 				& 0.69		& 0.63	& 0.54  & {\color{red} 0.49}	& {\color{red} 0.48}		\\
	\hline
	omega1 				& 0.72		& 0.63	& 0.57  & {\color{red} 0.48}	& {\color{red} 0.45}		\\
	\hline
	R2	 				& 0.55		& 0.49	& 0.22  & {\color{red} 0.19}	& {\color{red} 0.19}	    \\
	\hline
	U	 	            & 0.52		& 0.47	& 0.34  & {\color{red} 0.30} 	& {\color{red} 0.29}      \\
	\hline
	U0	                & 0.52		& 0.47	& 0.34  & {\color{red} 0.29}	& {\color{red} 0.29}	    \\
	\hline
	ZPVE 	            & 0.57		& 0.51	& 0.43  & {\color{red} 0.38}	& {\color{red} 0.39}	    \\
	\hline
\end{tabular}
\end{center}
\end{table}
}

\begin{table}
\caption{\label{tbl: qm9rmse} QM9 regression results (RMSE)}
\begin{center}
\begin{tabular}{|| l | c | c | c | c ||}
	\hline
	                    & WLGK 		& NGF 	& PSCN ($k \<= 10$) 	&  Second order CCN   \\
	\hline\hline
	alpha               & 0.68		& 0.65	& 0.31  & {\color{red} 0.26}	    \\
	\hline
	Cv	                & 0.78  	& 0.65	& 0.34  & {\color{red} 0.30}	    \\
	\hline
	G 	                & 0.67		& 0.62	& 0.43  & {\color{red} 0.38}	    \\
	\hline
	gap 	            & 0.86		& 0.82	& 0.75  & {\color{red} 0.69}	    \\
	\hline
	H 	                & 0.68		& 0.62	& 0.44  & {\color{red} 0.40}	    \\
	\hline
	HOMO 	            & 0.91		& 0.81	& 0.70  & {\color{red} 0.55}      \\
	\hline
	LUMO 	            & 0.84		& 0.79	& 0.73  & {\color{red} 0.68}	    \\
	\hline
	mu	 				& 0.92		& 0.87	& 0.75  & {\color{red} 0.67}		\\
	\hline
	omega1 				& 0.84		& 0.77	& 0.73  & {\color{red} 0.65}		\\
	\hline
	R2	 				& 0.81		& 0.71	& 0.31  & {\color{red} 0.27}	    \\
	\hline
	U	 	            & 0.67		& 0.62	& 0.44  & {\color{red} 0.40}      \\
	\hline
	U0	                & 0.67		& 0.62	& 0.44  & {\color{red} 0.39}	    \\
	\hline
	ZPVE 	            & 0.72		& 0.66	& 0.55  & {\color{red} 0.51}	    \\
	\hline
\end{tabular}
\end{center}
\end{table}

\ignore{
\begin{table}
\caption{\label{tbl: qm9rmse-ccn1d} QM9 regression results (RMSE)}
\begin{center}
\begin{tabular}{|| l | c | c | c | c | c ||}
	\hline
	                    & WLGK 		& NGF 	& PSCN ($k \<= 10$) 	& CCN 1D 				&  CCN 2D   \\
	\hline\hline
	alpha               & 0.68		& 0.65	& 0.31  & {\color{red} 0.29}	& {\color{red} 0.26}	    \\
	\hline
	Cv	                & 0.78  	& 0.65	& 0.34  & {\color{red} 0.31}	& {\color{red} 0.30}	    \\
	\hline
	G 	                & 0.67		& 0.62	& 0.43  & {\color{red} 0.39}	& {\color{red} 0.38}	    \\
	\hline
	gap 	            & 0.86		& 0.82	& 0.75  & {\color{red} 0.71}	& {\color{red} 0.69}	    \\
	\hline
	H 	                & 0.68		& 0.62	& 0.44  & {\color{red} 0.40}	& {\color{red} 0.40}	    \\
	\hline
	HOMO 	            & 0.91		& 0.81	& 0.70  & {\color{red} 0.56}	& {\color{red} 0.55}      \\
	\hline
	LUMO 	            & 0.84		& 0.79	& 0.73  & {\color{red} 0.68}	& {\color{red} 0.68}	    \\
	\hline
	mu	 				& 0.92		& 0.87	& 0.75  & {\color{red} 0.67}	& {\color{red} 0.67}		\\
	\hline
	omega1 				& 0.84		& 0.77	& 0.73  & {\color{red} 0.66}	& {\color{red} 0.65}		\\
	\hline
	R2	 				& 0.81		& 0.71	& 0.31  & {\color{red} 0.28}	& {\color{red} 0.27}	    \\
	\hline
	U	 	            & 0.67		& 0.62	& 0.44  & {\color{red} 0.40}	& {\color{red} 0.40}      \\
	\hline
	U0	                & 0.67		& 0.62	& 0.44  & {\color{red} 0.40}	& {\color{red} 0.39}	    \\
	\hline
	ZPVE 	            & 0.72		& 0.66	& 0.55  & {\color{red} 0.51}	& {\color{red} 0.51}	    \\
	\hline
\end{tabular}
\end{center}
\end{table}
}
\label{tbl: results}


\section{Conclusions}

We have presented a general framework called covariant compositional networks (CCNs)
for constructing covariant graph neural networks, which
encompasses other message passing approaches as special cases, but takes a more general and principled 
approach to ensuring covariance with respect to permutations.  
Experimental results on several benchmark datasets show that CCNs can outperform other state-of-the-art algorithms.

\bibliography{CGCN}
\bibliographystyle{iclr2018_conference}

\appendix

\section{Mathematical Background}

\paragraph{Groups.} A \df{group} is a set \m{G} endowed with an operation \m{G\times G\to G} (usually denoted multiplicatively) obeying 
the following axioms: 
\begin{compactenum}[~G1.]
	\item for any \m{u,v\tin G},~ \m{uv\tin G} (closure);
	\item for any \m{u,v,w\tin G},~ \m{u(vw)=(uv)w} (associativity);
	\item there is a unique \m{e\tin G}, called the \df{identity} of \m{G}, such that \m{eu=ue=u} for any \m{u\tin G};
	\item for any \m{u\tin G}, there is a corresponding element \m{u^{-1}\!\tin G} called the \df{inverse} of \m{u}, such that 
	\m{u\ts u^{-1}=u^{-1}u=e}.
\end{compactenum}
We do \emph{not} require that the group operation be commutative, i.e., in general, \m{uv\neq vu}. 
Groups can be finite or infinite, countable or uncountable, compact or non-compact. 
While most of the results in this paper would generalize to any compact group, the keep the exposition 
as simple as possible,  
throughout we assume that \m{G} is finite or countably infinite. 
As usual, \m{\abs{G}} will denote the size (cardinality) of \m{G}, sometimes also called the \df{order} of the group. 

\paragraph{Representations.} 
A (finite dimensional) \df{representation} of a group \m{G} over a field \m{\FF} 
is a matrix-valued function \m{R\colon G\to\FF^{d_\rho\times d_\rho}} 
such that \m{R(x)\ts R(y)=R(xy)} for any \m{x,y\tin G}. 
We generally assume that \m{\FF\<=\CC}, however in the special case 
when \m{G} is the symmetric group \m{\Sn} we can restrict ourselves to only considering 
real-valued representations, i.e., \m{\FF=\RR}. 

\ignore{
Two representations \m{\rho_1} and \m{\rho_2} of the same dimensionality \m{d}  
are said to be \df{equivalent} if for some invertible \m{T\tin\CC^{d\times d}},\; \m{\rho_1(x)=T^{-1}\nts\rho_2(x)\,T} for any \m{x\tin G}. 
A representation \m{\rho} is said to be \df{reducible} if  
it decomposes into a direct sum of smaller representations in the form  
\begin{equation*}
\rho(x)=
T^{-1}\br{\rho_1(x)\<\oplus\rho_2(x)}\, T=
T^{-1}\nts \br{
	\begin{array}{c|c}
	\rho_1(x)&0\\
	\hline
	0&\rho_2(x)
	\end{array}
}T\qqquad \forall\, x\tin G
\end{equation*}
for some invertible \m{T\tin\CC^{d_\rho\times d_\rho}}. 
A key theorem in theory of group representations is that any representation of a finite (more generally, compact) 
group over \m{\CC} reduces to a direct sum of irreducible representations. 
\vspace{-2pt}

A maximal system of pairwise inequivalent, irreducible representations we call a system of \df{irreps}. 
It is possible to show that if \m{\Rcal_1} and \m{\Rcal_2} are two different systems of irreps of the same 
finite group \m{G}, then \m{\Rcal_1} and \m{\Rcal_2} have the same (finite) number of irreps and 
there is a bijection \m{\phi\colon\Rcal_1\to\Rcal_2} such that if \m{\phi(\rho_1)=\rho_2}, then \m{\rho_1} and 
\m{\rho_2} are equivalent. 
A finite group always has at least one system of irreps in which all the \m{\rho(x)} matrices are \df{unitary}, i.e., 
\m{\rho(x)^{-1}=\rho(x)^\dag={\rho(x)^\ast}{}^\top} for all \m{x\in G}. 
}

\section{Proofs}

\begin{pfprop}{prop: invariant reps}
Let \m{\Gcal} and \m{\Gcal'} be two compound objects, where \m{\Gcal'} is equivalent to \m{\Gcal} 
up to a permutation \m{\sigma \tin \Sn} of the atoms. 
For any node \m{\neuron_a} of \m{\Gcal} we let \m{\neuron'_a} be the corresponding node of \m{\Gcal'}, 
and let \m{f_a} and \m{f'_a} be their activations. 
   

We prove that \m{f_a\<=f_a'} for every node in \m{\Gcal} by using induction on the distance 
of \m{\neuron_a} from its farthest descendant that is a leaf, which we call its \emph{height} 
and denote \m{h(a)}. 
For \m{h(a)\<=0}, the statment is clearly true, since \m{f_a=f'_a=\ell_{\xi(a)}}. 
Now assume that it is true for all nodes with height up to \m{h^\ast}. 
For any node \m{\neuron_a} with \m{h(a)\<=h^\ast\<+1},\: \m{f_a\<=\Phi(f_{c_1},f_{c_2},\ldots, f_{c_k})}, 
where each of the children \m{\sseq{c}{k}} are of height at most \m{h^\ast}, therefore 
\[f_a=\Phi(f_{c_1},f_{c_2},\ldots, f_{c_k})=\Phi(f'_{c_1},f'_{c_2},\ldots, f'_{c_k})=f'_a.\]
Thus, \m{f_a\<=f_a'} for every node in \m{\Gcal}. 
The proposition follows by \m{\phi(\Gcal) \<= f_r \<= f'_{r} \<= \phi(\Gcal')}. 
\end{pfprop}

\ignore{
\begin{pfprop}[2]
\end{pfprop}
}

\begin{pfprop}{prop: steerability}
Let \m{\Gcal}, \m{\Gcal'}, \m{\Ncal} and \m{\Ncal'} be as in Definition \ref{def: invariant compnet}. 
As in Definition \ref{def: steerable compnet}, for each node (neuron) \m{\neuron_i} in \m{\Ncal} there is 
a node \m{\neuron'_j} in \m{\Ncal'} such that their receptive fields are equivalent up to permutation. 
That is, if \m{\absN{\prt_i}\<=m}, then \m{\absN{\prt'_j}\<=m}, and there is a permutation \m{\pi\tin\Sbb_m}, 
such that if \m{\prt_i\<=\brN{e_{p_1},\ldots, e_{p_m}}} and \m{\prt'_j\<=\brN{e_{q_1},\ldots, e_{q_m}}}, then 
\m{e_{q_{\pi(a)}}\!\<=e_{p_a}}. By covariance, then \m{f'_j\<=R_\pi(f_i)}. 

Now let \m{\Gcal''} be a third equivalent object, and \m{\Ncal''} the corresponding comp-net. 
\m{\Ncal''} must also have a node, \m{\neuron''_k}, that corresponds to \m{\neuron_i} and \m{\neuron'_j}. 
In particular, letting its receptive field be \m{\Pcal''_k=\brN{e_{r_1},\ldots,e_{r_m}}}, 
there is a permutation \m{\sigma\tin\Sbb_m} 
for which \m{e_{r_{\sigma(b)}}\!\<=e_{q_b}}. Therefore, \m{f''_k\<=R_\sigma(f'_j)}. 

At the same time, \m{\neuron''_k} is also in correspondence with \m{\neuron_i}. In particular, letting 
\m{\tau\<=\sigma\pi} (which corresponds to first applying the permutation \m{\pi}, then applying \m{\sigma}), 
\m{e_{r_{\tau(a)}}\!\<=e_{p_a}}, and therefore \m{f''_k\<=R_\tau(f_i)}.  
Hence, the \m{\cbrN{R_\pi}} maps must satisfy 
\[
R_{\sigma\pi}(f_i)=R_\sigma(f_j')=R_\sigma(R_\pi(f_i)),\] 
for any \m{f_i}. More succinctly, \m{R_{\sigma\pi}=R_\sigma\<\circ R_\pi} for any \m{\pi,\sigma\tin\Sbb_m}. 
In the case that the \m{\cbrN{R_\pi}} maps are linear and represented by matrices, 
this reduces to \m{R_{\sigma\pi}=R_\sigma R_\pi}, which is equivalent to saying that they 
form a group representation of \m{\Sbb_m}. 
\ignore{
Let \m{\pi, \sigma, \tau \in \Sn}, with \m{\pi = \sigma \tau}. Let \m{\Gcal} be a compound object
and \m{\Gcal'} be the equivalent object where the atoms have been permuted by \m{\tau}. 
Let \m{\pi = \sigma \circ \tau} and \m{\Gcal'} be the compound object obtained after permuting
the atoms of \m{\Gcal} by \m{\tau}, \m{\Gcal''} be the compound object obtained after permuting
the atoms of \m{\Gcal'} by \m{\sigma}. Let \m{i, j, k \in \{1, ..., n\}}, where
\m{\pi(i) = \sigma \circ \tau(i) = \sigma (j) = k}.
Then \m{f_k^{''} = R_{\sigma}(f_j^{'}) = R_{\sigma} R_{\tau}(f_i) = R_{\sigma} R_{\tau} f_i}.
We also have: \m{f_k^{''} = R_{\sigma \tau} f_i}. Thus \m{R_{\sigma \tau} = R_{\sigma} R_{\tau}} as
desired.}
\end{pfprop}

\begin{pfprop}{prop: contractions}
Under the action of a permutation \m{\pi\tin\Sbb_m}, \m{A} and \m{B} transform as 
\begin{eqnarray}
A\mapsto A' \qqquad \qquad [A']_{\sseq{j}{k}}&=&
[P_\pi]_{j_1}{\vphantom{I}}^{\!\!j'_1}\, [P_\pi]_{j_2}{\vphantom{I}}^{\!\!j'_2}\:  \ldots\: 
[P_\pi]_{j_k}{\vphantom{I}}^{\!\!j'_k}\:[A]_{\sseq{j'}{k}}, \label{eq: act1} \\  
B\mapsto B' \qqquad \qquad [B']_{\sseq{j}{p}}&=&
[P_\pi]_{j_1}{\vphantom{I}}^{\!\!j'_1}\, [P_\pi]_{j_2}{\vphantom{I}}^{\!\!j'_2}\:  \ldots\: 
[P_\pi]_{j_p}{\vphantom{I}}^{\!\!j'_p}\:[B]_{\sseq{j'}{p}}. \label{eq: act2}  
\end{eqnarray}

\par{\textbf{Case 1.}~}
Let $C=A\<\otimes B$. Under \rf{eq: act1} and \rf{eq: act2}, \m{C} transforms into 
\begin{multline*}
[C']_{\sseq{i}{k+p}} = 
\br{[P_\pi]_{i_1}{\vphantom{I}}^{\!\!i'_1}\, \ldots [P_\pi]_{i_k}^{{\vphantom{I}}^{\!\!i'_k}} \;[A]_{\sseq{i'}{k}}} 
\br{[P_\pi]_{i_{k+1}}{\vphantom{I}}^{\!\!\!\!i'_{k+1}}   \ldots 
[P_\pi]_{i_{k+p}}{\vphantom{I}}^{\!\!\!\!i'_{k+p}} \;[B]_{i'_{k+1}, ... ,i'_{k+p}}} \\
= [P_\pi]_{i_1}{\vphantom{I}}^{\!\!i'_1} \: \ldots \: [P_\pi]_{i_{k+p}}{\vphantom{I}}^{\!\!\!\!i'_{k+p}}\: C_{i'_1 ..., i'_{k+p}}, 
\vspace{-5pt}
\end{multline*}
therefore, \m{C} is a $k\<+p$'th order $P$--tensor.

\par{\textbf{Case 2.}~}
Let $C=A\<\odot_{(\sseq{a}{p})} B$. Under \rf{eq: act1} and \rf{eq: act2}, \m{C} transforms as  
\begin{multline*}
[C']_{\sseq{i}{k}} = 
\br{[P_\pi]_{i_1}{}^{\!\!i'_1} \:  \ldots\: [P_\pi]_{i_k}{}^{\!\!i'_k} \:[A]_{\sseq{i'}{k}}} 
\br{[P_\pi]_{i_{a_1}}{}^{\!\!\!i'_{a_1}} \:  \ldots\: [P_\pi]_{i_{a_p}}{}^{\!\!\!i'_{a_p}} \:
[B]_{i'_{a_1}, ... ,i'_{a_p}}}= \\
= [P_\pi]_{i_1}{}^{\!\!i'_1} \: \ldots \: [P_\pi]_{i_{k}}{}^{\!\!i'_{k}} 
\:\cdot\: [P_\pi]_{i_{a_1}}{}^{\!\!\!i'_{a_1}}\:\ldots\: [P_\pi]_{i_{a_p}}{}^{\!\!\!i'_{a_p}} \:\cdot\:  
[C]_{i'_1 ..., i'_{k}}.  
\end{multline*}
Note that each of the \m{\smash{{[P_\pi]_{i_{a_j}}{}^{\!\!\!\!i'_{a_j}}}}} factors in this expression 
repeats one of the earlier appearing 
\m{[P_\pi]_{i_1}{}^{\!\!i'_1}, \:\ldots\:, [P_\pi]_{i_k}{}^{\!\!i'_{k}}} factors, but 
since \m{P_\pi} only has zero and one entries $[P_\pi]^2_{a, b} = [P_\pi]_{a, b}$, so these 
factors can be dropped. 
Thus, $C$ is a $k$'th order $P$--tensor.

\par{\textbf{Case 3.}~}
Let \m{C\<=A\tdown_{\sseq{a}{p}}} and 
\m{\sseq{b}{k-p}} be the indices (in increasing order) that are \textbf{not} amongst \m{\cbrN{\sseq{a}{p}}}.  
Under \rf{eq: act1}, \m{C} becomes 
\begin{align*}
[C']_{i_{b_1},\ldots,i_{b_{k-p}}} 
&=\sum_{i_{a_1}}\ldots \sum_{i_{a_p}} 
[P_\pi]_{i_1}{\vphantom{I}}^{\!\!i'_1} \:  \ldots\: [P_\pi]_{i_k}{\vphantom{I}}^{\!\!i'_k} \:[A]_{\sseq{i'}{k}}\\  
& = [P_\pi]_{i_{b_1}}{}^{\!\!i'_{b_1}} \:  \ldots\: [P_\pi]_{i_{b_{k-p}}}{}^{\!\!\!i'_{b_{k-p}}} 
\sum_{i'_{a_1}} \ldots \sum\limits_{i'_{a_p}}^{}  \:[A]_{\sseq{i'}{k}} 
\end{align*}
Thus, $C$ is a $k-p$'th order $P$--tensor.

\par{\textbf{Case 4.}~} Follows directly from 3. 

\par{\textbf{Case 5.}~} 
Finally, if $A_1, ..., A_u$ are $k$'th order $P$--tensors and 
$C = \sum_j \alpha_j A_j$ then 
\begin{align*}
[C']_{i_1, ..., i_k} 
& = \sum_j \alpha_j\:  [P_\pi]_{i_1}{\vphantom{I}}^{\!\!i'_1} \:  \ldots\: [P_\pi]_{i_k}{\vphantom{I}}^{\!\!i'_k}
\;[A'_j]_{i'_1, ..., i'_k} 
= [P_\pi]_{i_1}{\vphantom{I}}^{\!\!i'_1} \:  \ldots\: [P_\pi]_{i_k}{\vphantom{I}}^{\!\!i'_k}
\sum_j \alpha_k\; [A'_j]_{i'_1, \ldots, i'_k}, 
\end{align*}
so \m{C} is  a $k$'th order $P$--tensor.
\end{pfprop}

\begin{pfprop}{prop: promotion}
Under the action of a permutation \m{\pi\tin\Sbb_{m'}} on \m{\Pcal_b}, \m{\chi}
(dropping the \m{{}^{a\to b}} superscipt) transforms to \m{\chi'}, where \m{\chi'_{i,j}=\chi_{\pi^{-1}(i),j}}. 
However, this can also be written as 
\[\chi'_{i,j}=[P_\pi \chi]_{i,j}=
\sum_{i'}[P_{\pi}]_{i,i'} \chi_{i',j}.\vspace{-8pt}\]
Therefore, \m{\widetilde F_{\sseq{i}{k}}} transforms to 
\[\widetilde F'_{\sseq{i}{k}}=
\chi'_{i_1}{}^{\!\!j_1}\,\chi'_{i_2}{}^{\!\!j_2}\,\ldots\,
\chi'_{i_k}{}^{\!\!j_k}\,F_{\sseq{j}{k}}
=[P_\pi]_{i_1}{\vphantom{i}}^{\!\!i'_1} \:  \ldots\: [P_\pi]_{i_k}{\vphantom{i}}^{\!\!i'_k}\; 
\chi_{i'_1}{}^{\!j_1}\,\chi_{i'_2}{}^{\!j_2}\,\ldots\,
\chi_{i'_k}{}^{\!j_k}\:F_{\sseq{j}{k}},
\]
so \m{\widetilde F} is a \m{P}--tensor. 
\end{pfprop}

\begin{pfprop}{prop: stacking}
By Proposition \ref{prop: promotion}, under the action of any permutation \m{\pi}, each of the 
\m{\smash{\widetilde F_{p_j}}} slices of \m{\wbar F} transforms as 
\[[\widetilde F'_{p_j}]_{\sseq{i}{k}}=
[P_\pi]_{i_1}{\vphantom{i}}^{\!\!i'_1} \:  \ldots\: [P_\pi]_{i_k}{\vphantom{i}}^{\!\!i'_k}\;
[\widetilde F'_{p_j}]_{\sseq{i}{k}}.\]
At the same time, \m{\pi} also permutes the slices amongst each other according to 
\[\wbar F'_{\sseq{i}{k},j}=[\widetilde F_{p_{\pi^{-1}(j)}}]_{\sseq{i}{k}} =
\wbar F'_{\sseq{i}{k},\pi^{-1}(j)}. \]
Therefore 
\[\wbar F'_{\sseq{i}{k},j}=
[P_\pi]_{i_1}{\vphantom{i}}^{\!\!i'_1} \:  \ldots\: [P_\pi]_{i_k}{\vphantom{i}}^{\!\!i'_k}\;
[P_\pi]_{j}{\vphantom{i}}^{\!\!j'} 
\wbar F_{\sseq{i}{k},j}, 
\] 
so \m{\wbar F} is a \m{k\<+1}'th order \m{P}--tensor. 
\end{pfprop}

\begin{pfprop}{prop: adjacency}
Under any permutation \m{\pi\tin\Sbb_m} of \m{\prt_i},\: 
\m{A\tdown_{\prt'_i}} transforms to \m{A\tdown_{\prt'_i}}, 
where \m{[A\tdown_{\prt'_i}]_{\pi(a),\pi(b)}=[A\tdown_{\prt_i}]_{a,b}}. 
Therefore, \m{A\tdown_{\prt_i}} is a second order \m{P}--tensor. 
By the first case of Proposition \ref{prop: contractions}, 
$F\otimes A\tdown_{\prt_i}$ is then a \m{k\<+2}'th order \m{P}--tensor. 
\end{pfprop}

\ignore{
We are given node $n_a$ is a descendant of node $n_b$ and has receptive field
$\Pcal_a = (e_{p_1}, ..., e_{p_m})$ and $\Pcal_b = (e_{q_1}, ..., e_{q_m'})$
$F$ is a $k$th order $P$--tensor with respect to the permutations of
$\Pcal_a = (e_{p_1}, ..., e_{p_m})$. Let $\pi \in \Sbb_{|\Pcal_b|}$ and $\tilde{F}'$ be
the transformation of $\tilde{F}$ under $\pi$,
First we note that $[\tilde{F}]_{i_1, ..., i_k}$ can be expressed as a single specific index of tensor $F$:
\begin{align}
[\tilde{F}]_{i_1, ..., i_k} &= \sum_{j_1} \ldots \sum_{j_k} \chi_{i_1, j_1} \ldots
\chi_{i_k, j_k} [F]_{j_1, ..., j_k} \nonumber \\
[\tilde{F}]_{i_1, ..., i_k} &= [F]_{j'_1, ..., j'_k}, \text{ where } q_{i_t} = p_{j'_t} \text{ for } t \in \{1, ...,  k\} \label{eq: reduce}
\end{align}
We get the second equality by noting that the entries of $\chi$ are all $0$ or $1$.
Terms in the summation of the first equality are only ever non-zero when
each of $\chi_{i_1, j_1}$, ..., $\chi_{i_k, j_k}$ are $1$, which happens if and only
if $q_{i_1} = p_{j'_1}, q_{i_2} = p_{j'_2} ..., $ and $q_{i_k} = p_{j'_k}$.
After permuting the nodes in $\Pcal_b$, we have $q_{\pi(i_1)} = p_{j'_1}$, ...,
$q_{\pi(i_k)} = p_{j'_k}$.
Consider $F'$:
\begin{align}
[\tilde{F'}]_{i_1, ..., i_k} &= \sum_{j_1} \ldots \sum_{j_k}[P_\pi \chi]_{i_1, j_1} \ldots [P_\pi
\chi]_{i_k, j_k} [F]_{j_1, ..., j_k} \label{eq: first}\\
&= \sum_{j_1} \ldots \sum_{j_k} [\chi]_{\pi^{-1}(i_1), j_1} \ldots [\chi]_{\pi^{-1}(i_k), j_k}
[F]_{j_1, ..., j_k} \label{eq: sec}\\
&= [F]_{j'_1, ..., j'_k}, \text{ where } q_{\pi^{-1}(i_t)} = p_{j'_t} \text{ for }
t\in\{1, ..., k\} \label{eq: preinvperm} \\
&= [\tilde{F}]_{\pi^{-1}(i_1), ..., \pi^{-1}(i_k)} \label{eq: invperm} \\
&= \sum_{j_1} \ldots \sum_{j_k} [P_\pi]_{i_1, j_1} \ldots [P_\pi]_{i_k, j_k}[\tilde{F}]_{j_1, ..., j_k} \label{eq: prop5result}
\end{align}
Multiplying $\chi$ from the left with $P_\pi$ permutes the rows of $\chi$ according
to permutation $\pi$.
Equation (\ref{eq: invperm}) follows from (\ref{eq: preinvperm}) by applying (\ref{eq: reduce}). The summation in (\ref{eq: prop5result}) reduces to $[\tilde{F}]_{j_1, ..., j_k}$
where $\pi(j_t) = i_t$ for $t = 1, 2, ..., k$, which is true if and only if $j_t = \pi^{-1}(i_t)$
as in (\ref{eq: invperm}).
Thus, $\tilde{F}$ is a $k$'th order $P$-tensor as desired.
}
\ignore{
Proof by induction. Base case: let \m{n_a} be a leaf node in \m{\Gcal}, and \m{n'_b} be the
corresponding node in \m{\Gcal'}. \m{f_a = \labl_a},
\m{f'_b = \labl'_{\sigma^{-1}(b)} = \labl_{\sigma^{-1}(\sigma(a))} = \labl_a = f_a}.

Inductive hypothesis: assume that any node \m{n_a} that is a distance less than \m{r} from any leaf node, and has
feature vector \m{f_a = f'_b}, where node \m{n'_b} in \m{\Gcal'} is the corresponding
node of \m{n_a}.

Inductive step: for any node \m{n_a} that is a distance at most \m{r} from any leaf node in
\m{\Gcal}, let its children be: \m{n_{a_1}, n_{a_2}, ..., n_{a_k}},
and let the corresponding node \m{n'_b} have children
\m{n_{b_1}, n_{b_2}, ..., n_{b_k}}. Then for \m{i = 1,2, ..., k},
\m{f_{a_i} = f'_{b_i}} by the induction hypothesis. And so
\m{f_i = \Phi(f_{a_1}, ...f_{a_k}) = \Phi(f_{b_1}, ..., f_{a_k}) = f'_b} since the aggregation
function \m{\Phi} is permutation invariant.

Let \m{n_r} be the root node of \m{\Gcal}, and \m{n'_{\hat{r}}} be the corresponding root node
of \m{\Gcal'}.\m{\Phi(\Gcal) = f_r = f'_{\hat{r}} = \Phi(\Gcal')}. So the overall representation
of \m{\Gcal} is invariant to the permutation of its atoms as desired.
}
\ignore{ 
Let $\bar{F'}$ and $\tilde{F'}$ denote the transformation of
$\bar{F}$ and $\tilde{F}$ under permutation $\pi \in \Sbb_{\Pcal_t}$.
\begin{align*}
\bar{F'}_{i_1, ..., i_k, j} &= [\tilde{F'}_{p_j}]_{i_1, ...., i_k} \\
&= [\tilde{F}_{p_{\pi^{-1}(j)}}]_{\pi^{-1}(i_1), ..., \pi^{-1}(i_k)} \\
&= \sum_{j_1} \ldots \sum_{j_k} \sum_l [P_\pi]_{i_1, j_1} \ldots [P_\pi]_{i_k, j_k}
[P_\pi]_{j, l}
[\tilde{F}_l]_{j_1, ..., j_k}  \\
&= \sum_{j_1} \ldots \sum_{j_k} \sum_l [P_\pi]_{i_1, j_1} \ldots [P_\pi]_{i_k, j_k} [P_\pi]_{j, l}
[\bar{F}]_{j_1, ..., j_k, l}  \\
\end{align*}
Thus $\bar{F}$ is a $k+1$'th order $P$--tensor.
}

\end{document}